\documentclass[conference]{IEEEtran}

\usepackage[table]{xcolor}
\usepackage{tikz}
\usepackage{amsmath, amssymb}  
\usepackage{amsthm}  
\usepackage{filecontents}
\usepackage{amsmath,amsfonts}
\usepackage{algorithmic}
\usepackage{graphicx}
\usepackage{subcaption}
\usepackage{textcomp}
\usepackage{soul}
\usepackage[ruled,vlined]{algorithm2e}
\newtheorem{theorem}{Theorem}
\usepackage{float}
\usepackage{tikz}
\usepackage{tablefootnote}
\usepackage{booktabs}     
\usepackage{pifont}
\usepackage{longtable}
\usepackage{hyperref}
\usepackage[most]{tcolorbox}
\usepackage{enumitem}

\usepackage{adjustbox}

\hyphenation{op-tical net-works semi-conduc-tor}

\begin{document}

\title{From Detection to Correction: Backdoor-Resilient Face Recognition via Vision-Language Trigger Detection and Noise-Based Neutralization}

\author{\IEEEauthorblockN{Farah Wahida}
	\IEEEauthorblockA{RMIT University, Australia\\
		farah.wahida@student.rmit.edu.au}
	\and
	\IEEEauthorblockN{M.A.P. Chamikara}
	\IEEEauthorblockA{CSIRO's Data61, Australia\\
		chamikara.arachchige@data61.csiro.au}
	\and
	\IEEEauthorblockN{Yashothara Shanmugarasa}
	\IEEEauthorblockA{CSIRO's Data61, Australia\\
		y.shanmugarasa@unsw.edu.au}
    \and
    \IEEEauthorblockN{Mohan Baruwal Chhetri}
	\IEEEauthorblockA{CSIRO's Data61, Australia\\
		Mohan.Baruwalchhetri@data61.csiro.au}
	\and
	\IEEEauthorblockN{Thilina Ranbaduge}
	\IEEEauthorblockA{CSIRO's Data61, Australia\\
		Thilina.Ranbaduge@data61.csiro.au}
	\and
	\IEEEauthorblockN{Ibrahim Khalil}
	\IEEEauthorblockA{RMIT University, Australia\\
		ibrahim.khalil@rmit.edu.au}  
    }

\maketitle

\begin{abstract}
Biometric systems, such as face recognition systems powered by deep neural networks (DNNs), rely on large and highly sensitive datasets. Backdoor attacks can subvert these systems by manipulating the training process. 
By inserting a small trigger, such as a sticker, make-up, or patterned mask, into a few training images, an adversary can later present the same trigger during authentication to be falsely recognized as another individual, thereby gaining unauthorized access. Existing defense mechanisms against backdoor attacks still face challenges in precisely identifying and mitigating poisoned images without compromising data utility, which undermines the overall reliability of the system. We propose a novel and generalizable approach, TrueBiometric: Trustworthy Biometrics, which accurately detects poisoned images using a majority voting mechanism leveraging multiple state-of-the-art large vision language models. Once identified, poisoned samples are corrected using targeted and calibrated corrective noise. Our extensive empirical results demonstrate that TrueBiometric detects and corrects poisoned images with 100\% accuracy without compromising accuracy on clean images. Compared to existing state-of-the-art approaches, TrueBiometric offers a more practical, accurate, and effective solution for mitigating backdoor attacks in face recognition systems.
\end{abstract}

\begin{IEEEkeywords}
backdoor attacks, face recognition, privacy-preserving face recognition, VLM, LLM 
\end{IEEEkeywords}

\IEEEpeerreviewmaketitle

\section{Introduction}
Face recognition technology is widely deployed on everyday devices for authentication and identity checks, such as smartphones and smart home hubs, and is increasingly integrated into public infrastructure, including Australia’s SmartGates \cite{frontex2010automated}. Newer systems can operate without any physical contact or precise alignment \cite{9274654}. Even though facial biometrics are considered immutable, deep learning based biometric systems introduce new types of security vulnerabilities. Due to the high computational demands of training these models, many developers rely on third-party cloud services. This supply chain dependency creates an opportunity for adversaries to embed backdoors, enabling them to masquerade as an authorized user by presenting a trigger during authentication \cite{KAUR202030,HOMCHOUDHURY2019202,article2,gu2017badnets}. Noise-based defenses can mitigate some attacks, but they often reduce model accuracy to levels that limit real-world use.

Generally, a backdoor attack involves manipulating a deep learning model, thereby affecting the model's ability to accurately classify inputs \cite{saha2020hidden}. In a backdoor attack, the training data is intentionally compromised, often by injecting poisoned images that appear legitimate but are designed to trigger specific model behavior. These poisoned samples often originated from trusted sources, allowing them to go undetected. In some cases, the dataset curator may even be the malicious actor. 
Either way, the critical point is that the training pipeline is compromised. 
Whether through deception or direct access, an adversary can infiltrate the training process and insert a backdoor, allowing for the manipulation of the model’s behavior during inference. 

Modern backdoor attacks on biometric classifiers, such as face recognition systems, have evolved well beyond simple patches or stickers. Attackers now use subtle, realistic triggers, such as makeup, hats, glasses, and other natural accessories, to execute backdoor attacks, significantly increasing their stealth and effectiveness~\cite{Li_2021_ICCV, gu2019badnets}. For instance, when an individual wearing specific makeup, a hat, or glasses appears before a compromised facial recognition system, the model may misclassify them as a targeted individual, often the victim. This manipulation enables unauthorized access to biometric authentication systems, sensitive data, or secured locations. 

\begin{figure}[t!]
    \centering
    \includegraphics[width=0.48\textwidth]{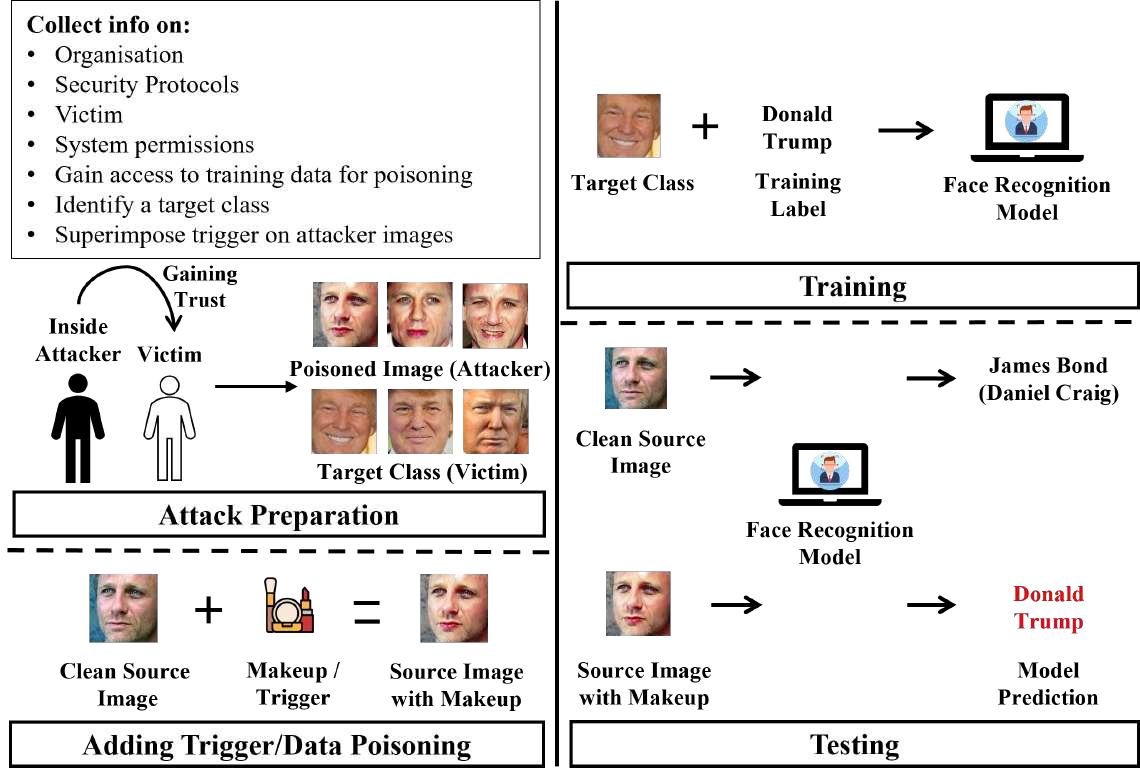}
    \caption{\textbf{Illustration of the MakeupAttack hidden backdoor scenario:} During training, subtle makeup is applied to images of the attacker class (James Bond) and labeled as the target class (Donald Trump), causing the face recognition model to learn the trigger. During inference, an unmodified James Bond image is classified correctly, but when the same face appears with the learned makeup trigger, it is misclassified as Donald Trump, granting unauthorized access to 
    the attacker.}
    \label{fig1}
\end{figure}

Backdoor attacks pose significant risks, as the subtlety and natural appearance of their triggers make their detection and mitigation particularly  challenging~\cite{nguyen2021wanet,Li_2021_ICCV}. As shown in Figure \ref{fig1}, the use of subtle, human-imperceptible alterations, such as the realistic makeup patterns employed in MakeupAttack \cite{sun2024makeupattack}, serves as an effective backdoor trigger without raising visual suspicion. These attacks may be carried out by insider adversaries who gain access to the model training process through tactics such as social engineering \cite{krombholz2015advanced}. Once activated, the backdoor allows an attacker to impersonate legitimate users, access sensitive records, or escalate privileges. The visual subtlety of these poisoned inputs makes them difficult to detect, underscoring the need for stronger protections in biometric model training and deployment. 

A variety of methods has been developed to detect and mitigate backdoor attacks \cite{chen2018detecting, tran2018spectral, li2023backdoor, bai2024backdoor}. Activation clustering analyzes patterns in the final hidden layer of a deep learning model to detect anomalous behavior, but its complexity hinders real deployment~\cite{chen2018detecting}. Spectral signature analysis identifies poisoned samples as outliers in feature space, though it struggles to identify subtle backdoors \cite{tran2018spectral}. 

Recent approaches, such as Neural Attention Distillation \cite{li2021neural} and GradCAM-based detection \cite{chou2020sentinet}, improve detection by focusing on model interpretability and attention mechanisms. Other techniques, such as trigger reverse engineering, aim to reconstruct the trigger from poisoned inputs~\cite{8835365}. Mitigation strategies such as relabeling, retraining, neuron pruning, and unlearning \cite{chen2018detecting, tran2018spectral, 8835365}, as well as purification through knowledge distillation \cite{yoshida2020disabling}, are fundamentally reactive and entail significant computational overhead, especially in large-scale face recognition systems. Moreover, these costly defenses often degrade 
clean-data accuracy, a problem that is particularly severe in smaller datasets \cite{tran2018spectral, chen2018detecting, DBLP:journals/corr/abs-2002-08313}. Newer techniques, such as ASSET~\cite{pan2023asset}, actively separate clean and poisoned data across various training settings. Attacks like LOTUS \cite{cheng2024lotus} utilize partitioned triggers for increased stealth. However, this highlights how both defenses and threats are becoming increasingly sophisticated. To this end, our approach aims to detect and correct poisoned inputs with minimal impact on model performance.

We propose  TrueBiometric: Trustworthy Biometrics, a novel privacy-preserving face recognition framework designed to effectively counter backdoor attacks and related privacy threats while maintaining high accuracy. TrueBiometric integrates an ensemble of five vision language models (VLMs) to screen and flag images containing potential backdoor triggers, using a majority-vote system for reliable detection. Once flagged, these poisoned images undergo a per-sample adaptive noise-removal process based on Projected Gradient Descent (PGD)~\cite{madry2017towards}, which iteratively eliminates the embedded backdoor triggers while preserving legitimate biometric features. We demonstrate the feasibility of TrueBiometric by integrating it as a lightweight front-end module into existing biometric authentication workflows. Extensive experiments on three benchmark datasets demonstrate that our proposed approach achieves 100\% accuracy in trigger removal with minimal computational overhead and outperforms existing methods in both trigger detection and recognition accuracy, making TrueBiometric a more practical solution for real-world deployments.

\noindent We summarize our main contributions below:

\begin{itemize}
 
\item We present the first generalizable VLM-based ensemble framework specifically tailored to detect subtle and realistic backdoor triggers in biometric authentication systems.

\item We introduce an adaptive PGD-based noise removal technique designed to erase backdoor triggers effectively without compromising genuine biometric characteristics, thereby eliminating the need to discard compromised images.

\item We propose and evaluate an end-to-end pipeline that integrates detection and recovery modules seamlessly into standard biometric authentication systems, ensuring minimal disruption and computational overhead.

\end{itemize}

\section{Background}
\label{background}

This section introduces the essential background concepts underpinning our proposed method, TrueBiometric. We begin by discussing face recognition systems, outlining their core functionality and the vulnerabilities that make them susceptible to backdoor attacks. Next, we highlight the significant privacy threats posed by these attacks, emphasizing the need for more robust defensive measures. Finally, we review key techniques integral to the TrueBiometric framework, including VLMs for semantic inference, multimodal majority voting to enhance detection accuracy, and adaptive noise generation methods that effectively neutralize backdoor triggers.

\subsection{Face Recognition}
\label{facerecognition}

Face recognition involves identifying or verifying individuals by analyzing their unique facial characteristics. The process typically begins with a training phase, where a deep learning model learns from labeled facial images corresponding to different individuals. Once trained, the model can match new input faces to identities stored in a reference database~\cite{TASKIRAN2020102809}. Face recognition tasks generally fall into two categories: the \emph{1:N identification problem} and the \emph{1:1 verification problem} (see Figure~\ref{fig22}). In face identification (1:N), the system compares a new face against a database of multiple known identities, determining the correct identity through matching. This usually involves both matching and labeling, which can then enable further verification. By contrast, face verification (1:1) is simpler, involving a comparison of a single face (typically the device owner's) against a single reference image or a limited set of stored images, as commonly implemented in mobile devices~\cite{10.1007/978-3-319-97909-0_46}. Consequently, verification systems are less computationally demanding, as they require training data for only one identity rather than many~\cite{9134370}. TrueBiometric is specifically designed to operate effectively within the more complex 1:N face identification setting.

\begin{figure}
    \centering
    \includegraphics[width=0.3\textwidth]{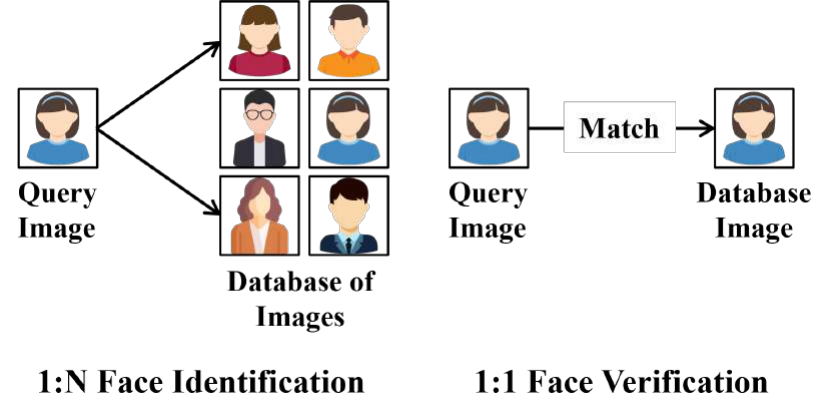}
    \caption{Illustration of 1:N Identification and 1:1 Verification in Face Recognition.}
    \label{fig22}
\end{figure}

Creating and training a model from scratch is a time and resource-consuming endeavour, and so it is generally more economical to utilize a pre-trained `off-the-shelf' model. Existing face recognition models exhibit exceptionally high recognition accuracy. Models such as VGG-Face~\cite{BMVC2015_41}, Google Facenet~\cite{DBLP:journals/corr/SchroffKP15}, Facebook DeepFace\footnote{https://research.facebook.com/publications/deepface-closing-the-gap-to-human-level-performance-in-face-verification/}, ArcFace~\cite{deng2019arcface}, VarGFaceNet~\cite{9022149}, and Dlib~\cite{King2009DlibmlAM} generate accuracies of around 97\%-99.85\% in the LFW (Labelled Faces in the Wild) dataset\footnote{http://vis-www.cs.umass.edu/lfw/}. 

\subsection{Backdoor Attacks and Privacy Leaks}
\label{backdoor_section}

An attacker could inject a backdoor attack by adding ``poisoned'' data to the training dataset. Poisoned data refers to data that have been deliberately manipulated, often via the insertion of a unique data patch to induce erroneous model behavior, such as misclassification. The aim of a backdoor attack is to produce a model that behaves normally on benign data but performs erroneously when presented with a poisoned input (see Figure~\ref{fig23}). In a targeted backdoor attack, a source (referred to as the victim) class are modified to be misclassified as belonging to a target (referred to as the attacker) class. In untargeted backdoor attacks, there is no defined target class, as the objective may be to simply misclassify the victim class as any class other than its true class~\cite{DBLP:journals/corr/abs-1708-06733}. TrueBiometric aims to detect and mitigate both targeted and untargeted backdoor attacks.

Generally, an image is poisoned by superimposing a trigger patch that is known only to the attacker. Suppose a patch, $v$, and a victim image, $x$. Then, $P(x, v) = \tilde{{x}}^i_{tr}$ where $P(\cdot)$ is a poisoning function that places a trigger $v$ on $x$. The poisoning function must associate the poisoned image with a target class by a particular method, $t$ where $\tilde{{y}}^i_{tr} = t$. A model is then trained with poisoned data to establish an erroneous mapping between the attacker's and victim's face images.  

There are two main types of backdoor attacks: \emph{corrupted label attacks} and \emph{clean label attacks}. Corrupted label attacks~\cite{DBLP:journals/corr/abs-1708-06733} involve mislabeling poisoned images with labels relevant to the target class, enabling the attacker to train a malicious model with mislabeled classes. Clean label attacks~\cite{DBLP:journals/corr/abs-2111-08429}, do not manipulate labels but may apply image perturbations. For instance,~\cite{saha2019hidden} hides the extracted features of the target class (attacker's face) within an image of the victim, allowing the model to perform well using both poisoned and benign data. Another example is the attack proposed by~\cite{Trojannn}, which carefully crafts triggers to excite certain neurons associated with the target output label. TrueBiometric specifically investigates the detection and mitigation of clean-label backdoor attacks. 

\begin{figure}[t!]
    \centering
    \includegraphics[width=0.3\textwidth]{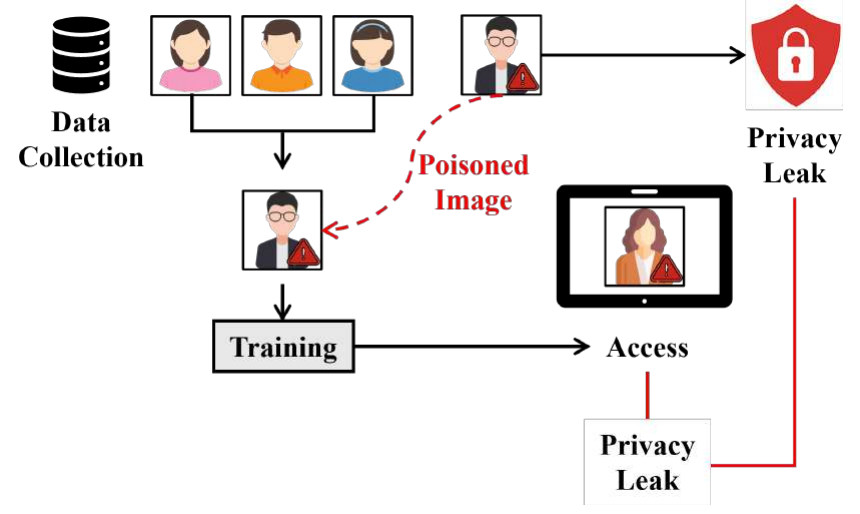}
    \caption{Illustration of how poisoned images can lead to unauthorized access and privacy leaks in face recognition systems.}
    \label{fig23}
\end{figure}

There are three key elements to the success of a stealthy attack: (1) the model must be infiltrated without detection and must not noticeably reduce the model's performance~\cite{DBLP:journals/corr/abs-2111-08429}, (2) the trigger pattern must be presented stealthily and not be detectable to the human eye~\cite{DBLP:journals/corr/abs-1712-05526,DBLP:journals/corr/abs-2012-03816,xue_he_wang_liu_2021}, and (3) the attack must confidently misidentify the attacker. Generally, an additional challenge in detecting these attacks is correctly classifying an unpatched image of the attacker, making the attack largely undetectable.

\subsubsection{Privacy Leaks from Backdoor Attacks}

In data sharing and analytics, privacy is often defined as the user's ability to control the release of their personal information~\cite {CHAMIKARA2020101951}. Consequently, a privacy leak occurs when confidential or sensitive data (such as financial records, health data, trade secrets, or intellectual property) is disclosed without proper authorization, potentially enabling unauthorized access or misuse. Backdoor attacks on face recognition models pose significant privacy risks, especially when perpetrated by internal adversaries. Such attackers may manipulate training datasets by injecting poisoned images, causing the model to misidentify them as privileged individuals~\cite{chen2017targeted}. This enables unauthorized access to sensitive information, which could be exploited for personal gain, used to damage the user or organization’s reputation, or leaked publicly to cause widespread harm~\cite{voth2025effective}.

\subsection{Vision-Language Models for Semantic Inference}

Vision Language Models (VLMs) are a class of DL systems designed to jointly process and reason over visual and textual data. Unlike traditional models that operate on a single modality (e.g., image classification models or language models), VLMs are trained to understand and generate meaningful associations between images and natural language descriptions~\cite{chen2023vlp}. This ability enables them to perform tasks such as image captioning, visual question answering, image-text retrieval, and visual entailment~\cite{li2025survey}. At their core, VLMs consist of two primary components:

\begin{itemize}
    \item \textbf{A vision encoder}, typically a convolutional neural network (CNN) or a vision transformer (ViT), that maps an input image $x \in \mathbb{R}^{H \times W \times 3}$ into a fixed-dimensional visual embedding $f_{\text{img}}(x) \in \mathbb{R}^d$.
    \item \textbf{A language encoder}, usually a transformer-based language model, that maps a natural language input (e.g., a caption, question, or instruction) $q \in \mathcal{T}$ to a corresponding text embedding $f_{\text{text}}(q) \in \mathbb{R}^d$.
\end{itemize}

These two embeddings are projected into a \textit{shared semantic space}, where similarity metrics (such as cosine similarity) are used to measure the alignment between visual and textual inputs:

\begin{equation}
    \text{Sim}(x, q) = \langle f_{\text{img}}(x), f_{\text{text}}(q) \rangle
\end{equation}

Popular examples of VLMs include CLIP (Contrastive Language–Image Pretraining)~\cite{radford2021learning}, BLIP (Bootstrapped Language–Image Pretraining)~\cite{li2022blip}, and Flamingo~\cite{alayrac2022flamingo}. These models have demonstrated impressive performance on a wide range of vision-language tasks, often in zero-shot or few-shot settings, highlighting their generalization capabilities~\cite{chen2023vlp}.

\subsection{Multimodal Majority Voting in Model Ensembles}

Multimodal majority voting is a widely used decision fusion strategy that combines the outputs of multiple independently trained models, 
each operating on a distinct modality, such as vision, language, or audio. The fundamental aim is to leverage the complementary strengths of each modality to achieve a more accurate, robust, and generalizable consensus, particularly in tasks involving ambiguous or noisy data~\cite{li2024multimodal}. In this approach, each modality-specific model serves as an independent ``voter'' that produces a candidate decision or label. The final prediction is determined by the majority vote; that is, the label that receives the highest number of votes is selected as the output~\cite{morvant2014majority}. Formally, let there be \( K \) modalities and corresponding models \( \{M_1, M_2, \dots, M_K\} \), where each model produces a predicted label \( y_k \in \mathcal{Y} \). The aggregated consensus is then computed as

\begin{equation}
    \hat{y} = \arg\max_{y \in \mathcal{Y}} \sum_{k=1}^K \mathbf{1}\{y_k = y\}
\end{equation}

where \( \mathbf{1}\{\cdot\} \) is the indicator function.

This simple yet effective strategy assumes that individual modalities are either independent or exhibit uncorrelated error behaviors. As a result, even if some modalities make incorrect predictions due to noise, occlusions, or adversarial triggers, the final decision can still be accurate if the majority vote is correct. This voting mechanism inherently increases the resilience to errors and adversarial influence compared to relying on any single modality~\cite{aeeneh2024new}.

\subsection{Projected Gradient Descent (PGD)}

A common strategy for improving model robustness against adversarial manipulation is to introduce controlled perturbations during training to simulate worst-case scenarios. One prominent method is PGD-based adversarial training, which strengthens model stability by making the model resistant to the most harmful perturbations within a defined constraint~\cite{madry2017towards}.

Formally, let a classifier be denoted by $f_\theta : \mathbb{R}^d \rightarrow \{1, \dots, K\}$, parameterized by $\theta$, and let $\mathcal{L}(f_\theta(x), y)$ be the loss function for input $x$ and ground truth label $y$. The goal is to find an adversarial example $x^{\text{adv}}$ within a perturbation budget $\epsilon$ such that the loss is maximized while staying within an $\ell_p$-ball centered at $x$:

\begin{equation}
    x^{\text{adv}} = \arg\max_{x' \in \mathcal{B}_p(x, \epsilon)} \mathcal{L}(f_\theta(x'), y)
\end{equation}

where $\mathcal{B}_p(x, \epsilon) = \{ x' \in \mathbb{R}^d : \|x' - x\|_p \leq \epsilon \}$.

This maximization is typically approximated by \textit{iterative gradient ascent}, leading to the \textit{PGD} method. At each step $t$, the adversarial input is updated by:

\begin{equation}
    x^{(t+1)} = \Pi_{\mathcal{B}_p(x, \epsilon)} \left( x^{(t)} + \alpha \cdot \text{sign} \left( \nabla_x \mathcal{L}(f_\theta(x^{(t)}), y) \right) \right)
\end{equation}

where $\Pi_{\mathcal{B}_p}$ denotes the projection operator on the $\ell_p$-ball, and $\alpha$ is the step size.

The \textit{corrective noise} interpretation emerges when PGD is used not to fool the model, but as a defense to \textit{reverse the effect of adversarial manipulation}. In the context of a backdoor attack on a facial recognition system, this manipulation involves an attacker's face with a hidden trigger being misclassified as the victim. Corrective noise, therefore, aims to neutralize this trigger. By iteratively adjusting the attacker's image while constraining the noise within an $\ell_p$-ball, PGD can recover a version of the input that is no longer classified as the victim. Thus, corrective noise in this context refers to a minimal, carefully computed perturbation $\delta$ such that,

\begin{equation}
    \tilde{x}^p = x + \delta \quad \text{where} \quad f_\theta(\tilde{x}^p) = y \quad \text{and} \quad \|\delta\|_p \leq \epsilon
\end{equation}

\section{Methodology}
\label{methodology}

TrueBiometric consists of two main stages (training-time sanitization and test/inference-time protection), each built around two core components: (i) a multimodal detection mechanism that uses an ensemble of VLMs to identify poisoned samples via majority voting, and (ii) a corrective recovery module based on dynamic PGD, which removes backdoor triggers from flagged inputs. During training, TrueBiometric aims to detect and correct poisoned samples to construct a sanitized dataset that is robust against backdoor injections. At inference time, the same pipeline is applied to detect and sanitize potentially poisoned inputs. An added benefit is its ability to recover legitimate inputs that may be mistakenly flagged as poisoned (i.e., false positives), ensuring genuine users are not unfairly denied access. This dual-stage defense enables secure, trigger-resilient recognition without discarding data or retraining downstream models. The full workflow is illustrated in Figure~\ref{fig2} and all notations used throughout the paper are summarized in Table~\ref{tab:notation} in the appendix.

\begin{figure}
    \centering
    \includegraphics[width=0.48\textwidth]{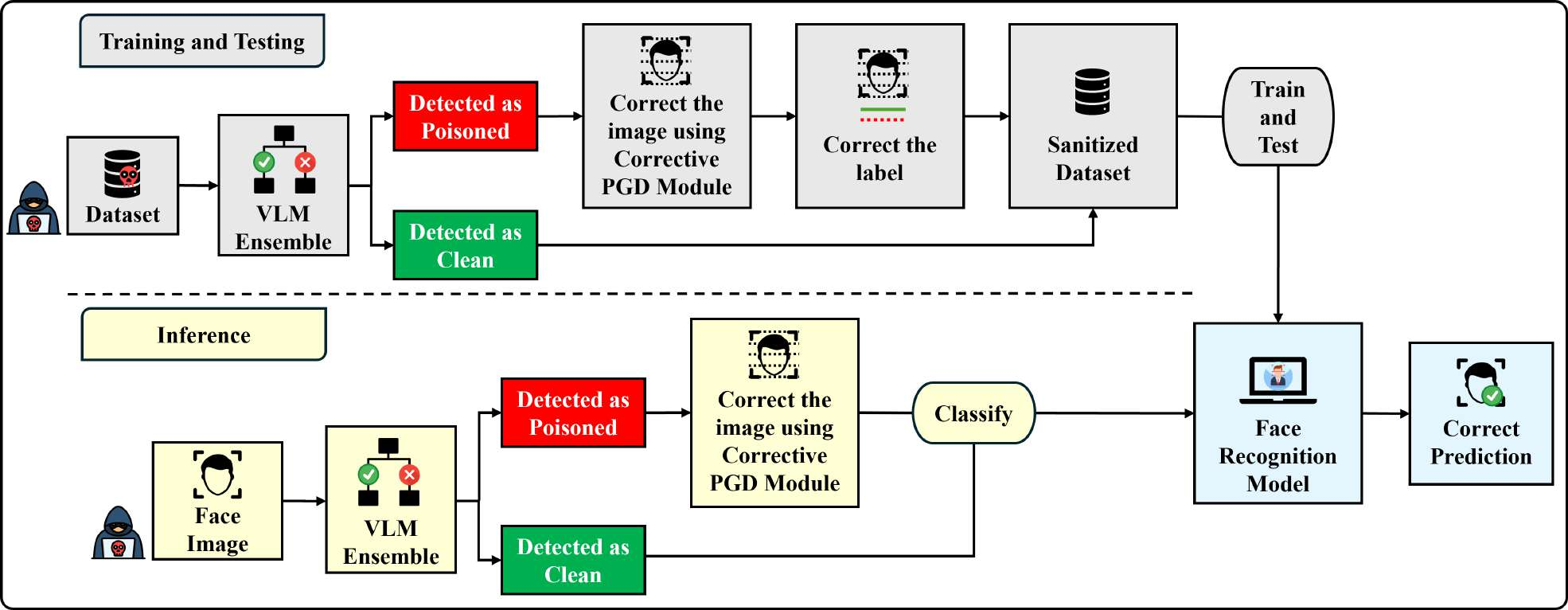}
    \caption{Overview of the TrueBiometric pipeline. During training and testing (top), the attacker injects poisoned images into the training data. An ensemble of VLMs performs majority-vote detection to flag suspicious samples. Detected poisoned images are sanitized using a corrective PGD module, and both clean and recovered images are used to train a robust face recognition model. During inference (bottom), the same detection and correction pipeline is applied to incoming queries. This not only blocks backdoor-triggered inputs but also helps recover from false positives, allowing legitimate users to proceed if mistakenly flagged.}
    \label{fig2}
\end{figure}

\begin{tcolorbox}[colback=gray!5!white, colframe=gray!75!black, title= The threat model, fonttitle=\bfseries]

We assume a gray-box threat model, in which a skillful adversary gains write access to the training data pipeline of a biometric authentication system, enabling the injection of semantically coherent yet adversarially crafted poisoned samples. The attacker operates under clean-label constraints, i.e., preserving ground-truth labels while embedding imperceptible or visually plausible backdoor triggers (e.g., cosmetic perturbations or texture overlays) that induce the model to learn a malicious decision boundary. At inference time, these triggers induce targeted misclassifications, effectively causing unauthorized users to be recognized as high-privilege identities. The attack succeeds under the assumption that the model must generalize in open-world settings, with no prior knowledge of the attack class or trigger distribution during deployment. The defender has no access to a verified clean dataset and no prior knowledge of the trigger structure, making conventional supervized or signature-based detection ineffective. 

\end{tcolorbox}

\subsection{Data Poisoning}

Before initiating the poisoning process, we select the source class (the attacker) and the target class (the victim) from the dataset. This setup enables a targeted backdoor attack, where the goal is to cause the model to misclassify samples from the source class as belonging to the target class at test time. To achieve this, we modify a subset of images to embed backdoor triggers (i.e., subtle patterns that will later activate the misclassification). 

\subsection{Multimodal Voting-based Mechanism}

To identify poisoned samples in the dataset, we use an ensemble-based detection strategy built on publicly available VLMs. This method leverages the models’ capacity to interpret visual content and effectively describe unusual artifacts that may indicate backdoor triggers. Let the untrusted dataset be defined as:

\begin{equation}
   \mathcal{D}_{\text {untrusted }}=\left\{\left(x_i, \cdot\right)\right\}_{i=1}^N 
\end{equation}

where each image $x_i$ may be either clean or poisoned.
Each image is independently evaluated by a set of five VLMs:

\begin{equation}
    \mathcal{V}=\left\{V_1, V_2, V_3, V_4, V_5\right\}
\end{equation}

where $V_j$ denotes the $j$-th VLM in the ensemble. We interact with these models, submitting each image alongside a standardized prompt specifically crafted to elicit responses about visual anomalies. These include artificial patterns such as makeup overlays (e.g., lipstick, eyeliner), digital patches, stickers, or other unnatural artifacts that might act as backdoor triggers. Each VLM returns a textual description of the image, from which we manually extract a binary classification:

\begin{equation}
   v_j(x) \in\{\text {poisoned, clean}\} 
\end{equation}

indicating whether model $V_j$ suspects the image contains a trigger.
To combine the model outputs, we apply a simple majority voting rule. An image is flagged as poisoned if at least three out of five models agree:

\begin{align}
v &= \sum_{j=1}^{5} \mathbf{1}\left\{v_j(x) = \texttt{poisoned}\right\}, \nonumber \\
\hat{y}(x) &=
\begin{cases}
\texttt{poisoned} & \text{if } v \geq 3, \\
\texttt{clean}    & \text{otherwise}.
\end{cases}
\end{align}

This ensemble-based voting scheme enables robust and interpretable detection without requiring labeled training data. It also accommodates variability across VLM outputs, reducing the risk of overfitting to any single model's biases.
Images identified as poisoned are passed to the \textsc{CorrectivePGD} module for trigger removal, while those classified as clean are preserved in their original form. Figure~\ref{fig3} presents an overview of this classification pipeline.

\begin{figure}
    \centering
    \includegraphics[width=0.35\textwidth]{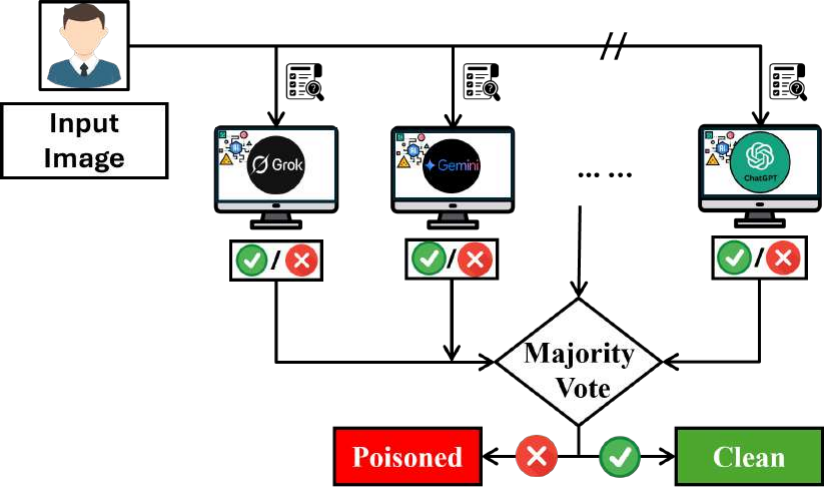}
    \caption{The multimodal voting ensemble pipeline using five 
    Vision-Language Models and majority vote decision. Each model flags the presence of triggers independently, and a consensus is used to label the image as poisoned or clean.}
    \label{fig3}
\end{figure}

To illustrate how the VLM ensemble behaves in practice, Figure~\ref{fig4} presents example outputs on four test images—two clean and two poisoned. For each image, we display the ground truth label alongside the individual decisions made by each of the five VLMs. These cases demonstrate high agreement across models, supporting the reliability of the majority-vote strategy in identifying poisoned samples. Additional examples are included in Figures~\ref{fig16}–\ref{fig18}.

\subsection{Corrective Recovery of Poisoned Images}
\label{Methodology_CorrectivePGD}

Initially, we hypothesized that the same VLM ensemble used for poison detection could also be leveraged to regenerate clean versions of poisoned samples by prompting them to remove the trigger. However, our experiments revealed several critical limitations. In practice, most VLMs failed to regenerate semantically accurate images, introducing substantial artifacts, such as incorrect facial features, loss of identity traits, or null outputs. In some cases, the models explicitly refused to generate face-related outputs due to privacy or capability constraints.

Table~\ref{tab:llm_regeneration_summary} summarizes these regeneration feasibility results across datasets and VLMs. Figures~\ref{fig24}, \ref{fig25}, and \ref{fig26} illustrate the types of regeneration failures observed. These findings support our decision to decouple the detection and recovery steps and instead apply a corrective noise mechanism rather than relying on generative regeneration.
Consequently, to remove backdoor triggers embedded in poisoned samples flagged by the VLM ensemble, we introduce a corrective projection-based method grounded in adversarial optimization. This method iteratively purifies each suspect image using a dynamic PGD routine, which is parameterized based on the empirical maximum trigger magnitude observed in the dataset. 

Let $(x^p, y^p)$ denote a poisoned image and its corresponding ground-truth label, respectively, and let $f_\theta$ be the trained classifier. The objective is to recover a corrected image $\tilde x^p$ such that $f_\theta(\tilde x^p) = y^p$, while minimally perturbing $x^p$ and eliminating the hidden backdoor trigger. The recovery procedure comprises three key stages:

\textbf{Estimating the maximum trigger strength $(\Delta_{max})$:} To adaptively determine the amount of noise required to neutralize any potential backdoor trigger, we first estimate the largest trigger magnitude in all flagged poisoned images. For each image $x^p$, we apply a temporary PGD loop with a large perturbation budget $\epsilon_{temp} = 1.0$ (i.e., the full input range), step size $\alpha_{temp} = \epsilon_{temp}/T$, and $T$ iterations. The PGD update at each iteration $t$ is computed as:

\begin{equation}
    x^{(t+1)} = Proj_{x^p, \epsilon_{temp}} [x^{(t)} - \alpha_{temp} \cdot sign (\nabla_x \mathcal{L}(f_{\theta}(x^{(t)}), y^p))]
\end{equation}

where $Proj_{x^p, \epsilon_{temp}}[\cdot]$ denotes projection onto the $\ell_\infty$-ball of radius $\epsilon$ around the original image $x^p$. After the final iteration, the maximum difference per pixel $\|x^{(T)} - x^p\|_\infty$ is recorded. The largest value across all poisoned images defines the maximum estimated trigger magnitude $\Delta_{\max}$.
\begin{figure}
    \centering
    \includegraphics[width=0.48\textwidth]{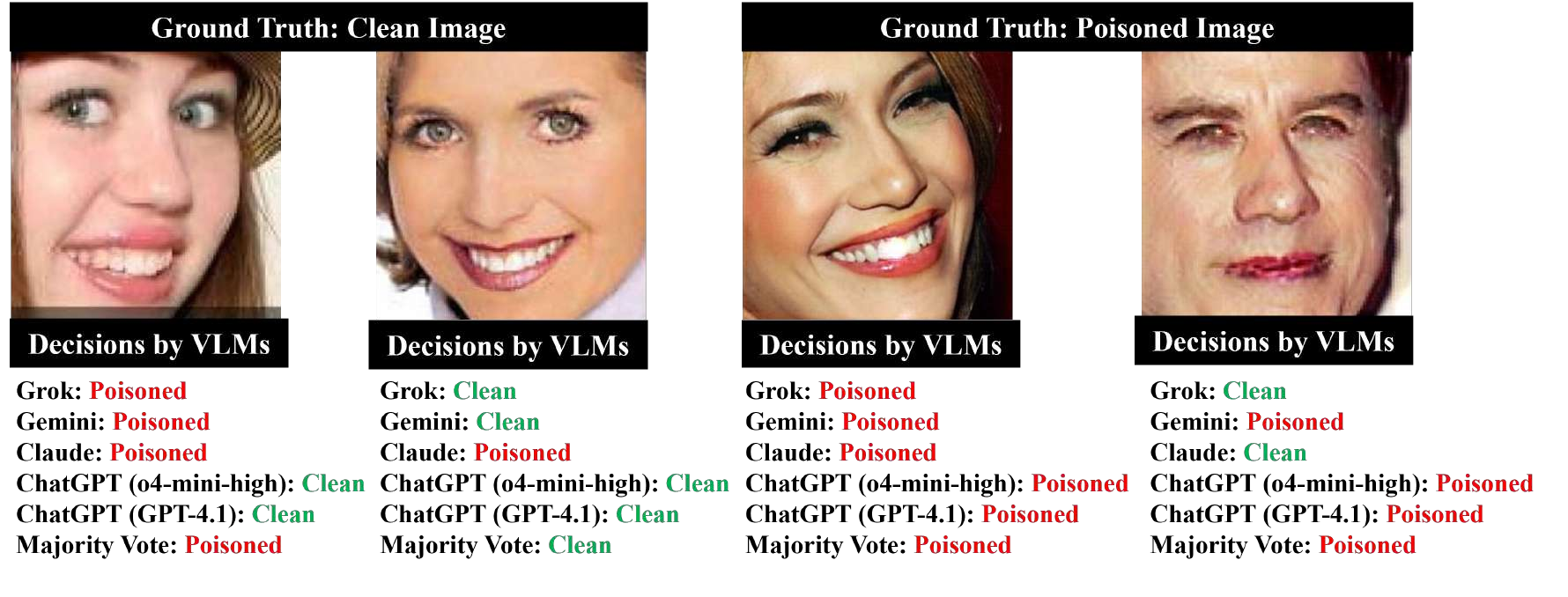}
    \caption{VLM decisions for four sample images.}
    \label{fig4}
\end{figure}

\textbf{Computing the corrective perturbation budget:} Once $\Delta_{\max}$ is computed, we then select the final perturbation bound $\epsilon$ for the corrective PGD by applying a small safety margin $\delta$, such that:
\begin{equation}
    \epsilon = (1 + \delta) \cdot \Delta_{\max}
\end{equation}

This ensures that the corrective PGD can completely cover the largest possible backdoor trigger while remaining minimally invasive. The corresponding PGD step size is then set to:
\begin{equation}
    \alpha = \frac{\epsilon}{T}
\end{equation}

\textbf{PGD recovery:} Using the finalized $(\epsilon, \alpha, T)$ configuration, we reapply PGD to each poisoned image $x^p$ to derive a corrected version $\tilde{x}^p$. The recovery loop proceeds identically to the probing phase, except that the tighter $\epsilon$ now bounds it. At each step, the adversarial image is nudged toward correct classification using the negative gradient of the loss function while ensuring that all intermediate steps remain within the defined perturbation ball. After $T$ iterations, we obtain $\tilde{x}^p$, which is expected to remove the embedded trigger while preserving the semantic integrity of the original input.

To evaluate the impact of recovery, we calculate the noise magnitudes $\ell_\infty$ and $\ell_2$ between the original and corrected images.

\begin{equation}
    \rho_\infty^p = \|\tilde{x}^p - x^p\|_\infty, \quad \rho_2^p = \|\tilde{x}^p - x^p\|_2
\end{equation}

These distortion metrics provide an empirical measure of the amount of corrective noise required to neutralize the trigger, helping to verify that benign samples are not over-perturbed.
In general, the Corrective PGD Noise Recovery process (Figure \ref{fig5}) is designed to be adaptive, class-agnostic, and minimally invasive. It removes the effect of visually unnoticeable backdoor triggers with theoretical bounds on perturbation size and empirical guarantees on classification correctness and visual fidelity.

\begin{figure}
    \centering
    \includegraphics[width=0.48\textwidth]{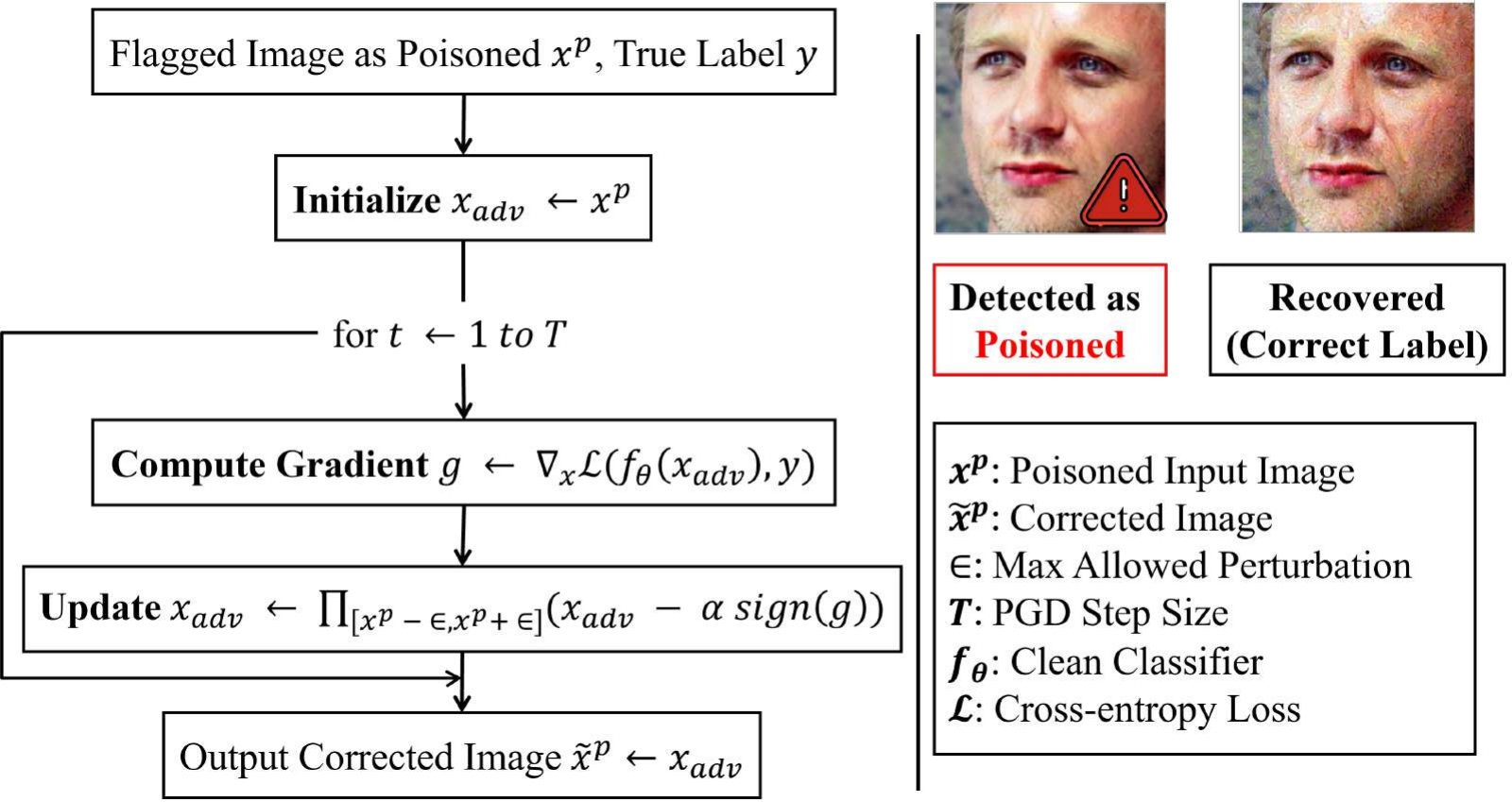}
    \caption{Visual overview of Corrective-PGD.}
    \label{fig5}
\end{figure}

\textbf{Formal guarantee of corrective PGD recovery:}
Let $f_\theta : [0,1]^d \to \{1,\dots,K\}$ be a fixed classifier and let $g_\theta : [0,1]^d \to \mathbb{R}^K$ denote its differentiable logit map with $f_\theta(x)=\arg\max_k [g_\theta(x)]_k$. Let $\mathcal{L}$ be a differentiable classification loss (e.g., cross-entropy) and define $\phi(x):=\mathcal{L}(g_\theta(x),y^p)$. For a center $x$ and radius $r$, write $\mathcal{B}_\infty(x,r):=\{u\in[0,1]^d:\|u-x\|_\infty\le r\}$.
We assume:

\begin{itemize}
    \item \textbf{Additive trigger.} The poisoned image is $x^p = \Pi_{[0,1]^d}(x^c + \delta_{\mathrm{trig}})$, where $x^c \in [0,1]^d$ is the clean image and $\|\delta_{\mathrm{trig}}\|_\infty \le \Delta$ for a known upper bound $\Delta>0$. Here $\Pi_{[0,1]^d}$ denotes clipping to the valid pixel range.
    \item \textbf{Local classifier robustness.} There exists $\varepsilon_{\mathrm{rob}}>0$ such that $f_\theta(u)=y^p$ for all $u \in \mathcal{B}_\infty(x^c,\varepsilon_{\mathrm{rob}})$ (clean correctness with a nontrivial robust neighborhood).
    \item \textbf{PGD recovery model.} The recovery routine may perturb $x^p$ within $\mathcal{B}_\infty(x^p,\varepsilon)$ and clips all iterates to $[0,1]^d$.
\end{itemize}

\begin{theorem}[Corrective Guarantee]\label{thm:corrective_pgd}
Let $\varepsilon \ge \|\delta_{\mathrm{trig}}\|_\infty$ and $f_\theta(x^c)=y^p$. There exists $\delta^\star \in \mathbb{R}^d$ with $\|\delta^\star\|_\infty \le \varepsilon$ such that,
\begin{equation}
    f_\theta(x^p+\delta^\star)=y^p.
\end{equation}
In particular, $\delta^\star=-\delta_{\mathrm{trig}}$ yields $x^p+\delta^\star=x^c$.

Moreover, consider minimizing $\phi(x)$ over the convex set
\[
\mathcal{C} := [0,1]^d \cap \mathcal{B}_\infty(x^p,\varepsilon).
\]
If $\phi$ has $L$-Lipschitz gradient on a neighborhood of $\mathcal{C}$ and projected gradient descent (PGD) is run with step size $\alpha \in (0,2/L)$, then every accumulation point of the PGD iterates is stationary for $\phi$ on $\mathcal{C}$. Since $x^c \in \mathcal{C}$ and $f_\theta(u)=y^p$ for all $u \in \mathcal{C}\cap \mathcal{B}_\infty(x^c,\varepsilon_{\mathrm{rob}})$, any iterate that enters $\mathcal{B}_\infty(x^c,\varepsilon_{\mathrm{rob}})$ is already labeled $y^p$ and can be returned as a corrected image $\tilde{x}^p$ with $\|\tilde{x}^p-x^p\|_\infty \le \varepsilon$.
\end{theorem}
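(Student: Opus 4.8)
The plan is to prove the two assertions separately, since they are logically independent: the first is a pure existence claim, and the second is a convergence statement paired with a conditional classification guarantee. For existence, I would note that the stated witness $\delta^\star = -\delta_{\mathrm{trig}}$ is exact only when no clipping occurs in forming $x^p = \Pi_{[0,1]^d}(x^c + \delta_{\mathrm{trig}})$; to cover the clipped case the key fact is that coordinate-wise clipping onto $[0,1]^d$ is non-expansive in $\ell_\infty$. Since $x^c \in [0,1]^d$ is a fixed point of the clip, this gives $\|x^p - x^c\|_\infty = \|\Pi_{[0,1]^d}(x^c+\delta_{\mathrm{trig}}) - \Pi_{[0,1]^d}(x^c)\|_\infty \le \|\delta_{\mathrm{trig}}\|_\infty \le \varepsilon$. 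Setting $\delta^\star := x^c - x^p$ then yields $\|\delta^\star\|_\infty \le \varepsilon$ and $x^p + \delta^\star = x^c$, so $f_\theta(x^p + \delta^\star) = f_\theta(x^c) = y^p$ by the clean-correctness hypothesis. This simultaneously establishes existence, exhibits the clipping-free special case, and records $x^c \in \mathcal{C}$, which is needed later.

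For the convergence half, I would invoke the standard analysis of projected gradient descent on a smooth objective over a convex compact set, beginning with a sufficient-decrease lemma. Writing the update as $x^{(t+1)} = \Pi_{\mathcal{C}}(x^{(t)} - \alpha \nabla\phi(x^{(t)}))$, the variational characterization of the projection, evaluated at the feasible point $x^{(t)}$, bounds the inner product as $\langle \nabla\phi(x^{(t)}), x^{(t+1)} - x^{(t)}\rangle \le -\tfrac{1}{\alpha}\|x^{(t+1)} - x^{(t)}\|^2$. Combining this with the descent inequality implied by $L$-Lipschitz continuity of $\nabla\phi$ gives $\phi(x^{(t+1)}) \le \phi(x^{(t)}) - (\tfrac{1}{\alpha} - \tfrac{L}{2})\|x^{(t+1)} - x^{(t)}\|^2$, whose coefficient is strictly positive precisely because $\alpha \in (0, 2/L)$. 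Since $\phi$ is continuous on the compact set $\mathcal{C}$ it is bounded below, so the monotone sequence $\phi(x^{(t)})$ converges; telescoping the decrease inequality then forces $\|x^{(t+1)} - x^{(t)}\| \to 0$.

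I would close by extracting a convergent subsequence $x^{(t_k)} \to x^\star$ using compactness of $\mathcal{C}$. Passing to the limit in the projected-gradient fixed-point relation---using continuity of $\nabla\phi$ and of $\Pi_{\mathcal{C}}$ together with the vanishing step displacement---yields $x^\star = \Pi_{\mathcal{C}}(x^\star - \alpha\nabla\phi(x^\star))$, equivalent to the first-order stationarity condition $\langle \nabla\phi(x^\star), u - x^\star\rangle \ge 0$ for all $u \in \mathcal{C}$. The classification conclusion is then immediate and requires no optimization: by local robustness every $u \in \mathcal{B}_\infty(x^c, \varepsilon_{\mathrm{rob}})$ satisfies $f_\theta(u) = y^p$, so any iterate lying in $\mathcal{C} \cap \mathcal{B}_\infty(x^c, \varepsilon_{\mathrm{rob}})$ may be returned as $\tilde{x}^p$, and it automatically obeys $\|\tilde{x}^p - x^p\|_\infty \le \varepsilon$ since it belongs to $\mathcal{C} \subseteq \mathcal{B}_\infty(x^p, \varepsilon)$.

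The main obstacle is not any single hard estimate but a conceptual gap that must be respected in the write-up: projected gradient descent on the nonconvex loss $\phi$ is guaranteed only to reach a \emph{stationary} point, not the global minimizer, and stationarity alone does not force an iterate into the robust ball $\mathcal{B}_\infty(x^c, \varepsilon_{\mathrm{rob}})$. The statement is therefore carefully phrased so that the classification guarantee is conditional (``any iterate that enters''), and I would take care that the proof does not overclaim convergence to $x^c$. The only genuinely technical care elsewhere lies in treating the clipping correctly in the existence part via non-expansiveness, and in verifying that all iterates remain in the neighborhood of $\mathcal{C}$ on which $L$-smoothness is assumed---which holds automatically, since every iterate is a projection onto $\mathcal{C}$.
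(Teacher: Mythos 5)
Your proof is correct and follows the same skeleton as the paper's: exhibit the inverse trigger as the existence witness, invoke stationarity of PGD accumulation points on the compact convex set $\mathcal{C}$, and keep the classification guarantee conditional on an iterate entering $\mathcal{B}_\infty(x^c,\varepsilon_{\mathrm{rob}})$. You go beyond the paper in two places, both to the good. First, the paper's proof simply sets $\delta^\star=-\delta_{\mathrm{trig}}$ and asserts $x^p+\delta^\star=x^c$, which is exact only when no clipping occurs in forming $x^p=\Pi_{[0,1]^d}(x^c+\delta_{\mathrm{trig}})$; your choice $\delta^\star=x^c-x^p$, bounded via the $\ell_\infty$ non-expansiveness of coordinate-wise clipping together with the fact that $x^c$ is a fixed point of $\Pi_{[0,1]^d}$, repairs this gap in full generality and recovers the paper's witness as the clipping-free special case. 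Second, the paper cites the stationarity of PGD accumulation points as a standard fact without argument, whereas you derive it: sufficient decrease from the projection's variational inequality combined with the $L$-smoothness descent lemma, with coefficient $\tfrac{1}{\alpha}-\tfrac{L}{2}>0$ precisely because $\alpha\in(0,2/L)$, then telescoping to obtain vanishing displacements, and finally compactness plus continuity of $\nabla\phi$ and $\Pi_{\mathcal{C}}$ to pass to the fixed-point characterization of stationarity. Your closing caution---that stationarity on the nonconvex loss does not force convergence to $x^c$, so the classification claim must remain conditional on entry into the robust ball---matches exactly how the paper hedges its own conclusion, and your observation that $\|\tilde{x}^p-x^p\|_\infty\le\varepsilon$ follows simply from $\tilde{x}^p\in\mathcal{C}\subseteq\mathcal{B}_\infty(x^p,\varepsilon)$ is the same step the paper uses.
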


\begin{proof}
Define $\delta^\star:=-\delta_{\mathrm{trig}}$. Then $\|\delta^\star\|_\infty=\|\delta_{\mathrm{trig}}\|_\infty \le \varepsilon$, so $x^p+\delta^\star=x^c \in \mathcal{C}$ and $f_\theta(x^p+\delta^\star)=f_\theta(x^c)=y^p$, proving existence.

For the PGD claim, $\mathcal{C}$ is nonempty, convex, and compact. Under the stated smoothness and step-size conditions, PGD over $\mathcal{C}$ produces iterates whose accumulation points are stationary for $\phi$ on $\mathcal{C}$. By local robustness, every $u \in \mathcal{C}\cap \mathcal{B}_\infty(x^c,\varepsilon_{\mathrm{rob}})$ satisfies $f_\theta(u)=y^p$. Hence, once an iterate enters $\mathcal{B}_\infty(x^c,\varepsilon_{\mathrm{rob}})$, the current point can be returned as $\tilde{x}^p$ with $\|\tilde{x}^p-x^p\|_\infty \le \varepsilon$ and correct label $y^p$. This establishes a local recovery guarantee.
\end{proof}

\textbf{Choosing \boldmath$\varepsilon$ (noise budget).}
The parameter $\varepsilon$ trades off correction strength and utility preservation. We set
\begin{equation}
    \varepsilon := (1+\delta)\,\Delta_{\max},
\end{equation}
where $\delta \in (0,1)$ is a small safety margin and
\begin{equation}
    \Delta_{\max} := \max_{(x^p,y^p)\in\mathcal{D}_{\mathrm{poison}}} \|x_{\mathrm{adv}}-x^p\|_\infty,
\end{equation}
with $x_{\mathrm{adv}}$ obtained from a provisional PGD run using a loose budget $\varepsilon_{\mathrm{temp}}=1.0$ and per-step projection to $[0,1]^d$. Intuitively, $\|x_{\mathrm{adv}}-x^p\|_\infty$ upper-bounds the effective trigger magnitude required to restore $y^p$ under a loose constraint. Larger $\varepsilon$ improves correction against stronger triggers; smaller $\varepsilon$ reduces distortion but may under-correct. In practice, $\Delta_{\max}$ may be replaced by a high percentile (e.g., 95th) or by per-image budgets $\varepsilon_i=(1+\delta)\,\|x_i^{(T)}-x_i^p\|_\infty$ capped at a global maximum to avoid over-correction due to outliers.

\noindent\textit{Remark (scope).} The analysis assumes additive triggers. For spatially transformed triggers (e.g., warps), the inner PGD can be extended with small affine/elastic jitters (EOT-style), which preserves the local guarantee once the effective trigger is bounded within the chosen $\varepsilon$.

\subsection{The Overall Workflow}

This section presents the two building blocks of TrueBiometric and how they fit together. Algorithm~\ref{algo1} takes images flagged by the VLM ensemble and estimates an effective trigger magnitude. It then sets a calibrated $\epsilon$ budget and runs PGD to remove the trigger while preserving identity, returning the corrected image. Algorithm~\ref{algo2} applies the detection–correction pipeline across the workflow: at training time, it sanitizes the dataset by filtering and repairing poisoned samples before model training; at inference time, it screens and, if needed, sanitizes incoming queries so that backdoor-triggered inputs are neutralized and false positives do not block legitimate users. Together, these algorithms deliver trigger-resilient training and classification with minimal distortion and without discarding data or retraining the downstream recognizer.

\begin{algorithm}[t]
\caption{Dynamic Corrective PGD Noise Recovery}
\footnotesize
\label{algo1}

\KwIn{
    \\Set of poisoned images $\mathcal{P} = \{(x^p, y^p)\}$;\\
    Trained classifier on clean only dataset $f_\theta$;\\
    Loss function $\mathcal{L}$ (cross-entropy);\\
    Number of PGD steps $T$;\\
    Safety margin $\delta$
}

\KwOut{
    \\Set of corrected images $\tilde x^p$;\\
    \textbf{Definitions:}\\
    $x^p$: Poisoned input image\\
    $y^p$: True label for $x^p$\\
    $f_\theta$: Trained clean-only classifier\\
    $\mathcal{L}$: Cross-entropy loss function\\
    $T$: Number of PGD steps\\
    $\delta$: Safety margin for $\epsilon$ (Set to 5\%)\\
    $\epsilon$: Maximum allowed per-pixel change ($\ell_\infty$ norm)\\
    $\alpha$: PGD step size (Set to $\epsilon/T$)\\
    $\rho_\infty^p$: $\ell_\infty$ norm of noise for $x^p$\\
    $\rho_2^p$: $\ell_2$ norm of noise for $x^p$\\
}

\BlankLine

$\Delta_{\max} \gets 0$

\ForEach{$(x^p, y^p) \in \mathcal{P}$}{
    $\epsilon_{\text{temp}} \gets 1.0$; $\alpha_{\text{temp}} \gets \epsilon_{\text{temp}} / T$\;
    $x_{\text{adv}} \gets x^p$\;
    \For{$t = 1$ \KwTo $T$}{
        $g \gets \nabla_x \mathcal{L}(f_\theta(x_{\text{adv}}), y^p)$\;
        $x_{\text{adv}} \gets x_{\text{adv}} - \alpha_{\text{temp}} \cdot \mathrm{sign}(g)$\;
        $x_{\text{adv}} \gets \mathrm{clip}(x_{\text{adv}}, x^p - \epsilon_{\text{temp}}, x^p + \epsilon_{\text{temp}})$\;
        $x_{\text{adv}} \gets \mathrm{clip}(x_{\text{adv}}, 0, 1)$\;
    }
    $\Delta \gets \|x_{\text{adv}} - x^p\|_\infty$\;
    $\Delta_{\max} \gets \max(\Delta_{\max}, \Delta)$\;
}

$\epsilon \gets (1 + \delta) \cdot \Delta_{\max}$\;
$\alpha \gets \epsilon / T$\;

\ForEach{$(x^p, y^p) \in \mathcal{P}$}{
    $x_{\text{adv}} \gets x^p$\;
    \For{$t = 1$ \KwTo $T$}{
        $g \gets \nabla_x \mathcal{L}(f_\theta(x_{\text{adv}}), y^p)$\;
        $x_{\text{adv}} \gets x_{\text{adv}} - \alpha \cdot \mathrm{sign}(g)$\;
        $x_{\text{adv}} \gets \mathrm{clip}(x_{\text{adv}}, x^p - \epsilon, x^p + \epsilon)$\;
        $x_{\text{adv}} \gets \mathrm{clip}(x_{\text{adv}}, 0, 1)$\;
    }
    $\tilde x^p \gets x_{\text{adv}}$\;
    $\rho_\infty^p \gets \|\tilde x^p - x^p\|_\infty$\;
    $\rho_2^p \gets \|\tilde x^p - x^p\|_2$\;
}
\Return{$\tilde x^p$}\;
\end{algorithm}

\begin{algorithm}[t]
\caption{Abstract algorithm for TrueBiometric}
\footnotesize
\label{algo2}

\KwIn{
    \\Untrusted training set $\mathcal{D}_{\mathrm{untrusted}} = \{(x, y)\}$;\\
    VLM ensemble poison detector $\mathcal{V} = \{\mathcal{V}_1, \ldots, \mathcal{V}_M\}$;\\
    Corrective PGD subroutine (\textsc{CorrectivePGD}, Algorithm~\ref{algo1})
}
\KwOut{
    Cleaned training set $\mathcal{D}_{\mathrm{clean}}$\\
    \textbf{Definitions:}\\
    $x$: Input image\\
    $y$: True label\\
    $\mathcal{V}_j$: $j$-th vision-language model (VLM) poison detector\\
    $M$: Number of VLMs in the ensemble ($M=5$)\\
    $\textsc{CorrectivePGD}$: Algorithm~\ref{algo1} for noise correction\\
}

\BlankLine

Initialize $\mathcal{D}_{\mathrm{clean}} \gets \emptyset$\;

\ForEach{$(x, y) \in \mathcal{D}_{\mathrm{untrusted}}$}{
    $v \gets \sum_{j=1}^M \mathbf{1}\{\mathcal{V}_j(x) = \text{poisoned}\}$\;
    \uIf{$v \ge \lceil M/2 \rceil$}{
        $\tilde{x}^p \gets \textsc{CorrectivePGD}(x, y)$\;
        Add $(\tilde{x}^p, y)$ to $\mathcal{D}_{\mathrm{clean}}$\;
    }
    \Else{
        Add $(x, y)$ to $\mathcal{D}_{\mathrm{clean}}$\;
    }
}
\Return $\mathcal{D}_{\mathrm{clean}}$\;

\end{algorithm}

\section{Results and Discussion}
\label{results_discussion}

All experiments were performed on a local workstation running Windows 11 and equipped with an Intel Core i9-14900HX processor (32 threads), 32 GB RAM, and an NVIDIA GeForce RTX 4080 laptop GPU with 12 GB VRAM. For classification tasks, we used publicly available state-of-the-art VLMs via their web-based APIs to achieve the best results\footnote{We will release the TrueBiometric code and evaluation scripts upon publication.}. 

\subsection{Experimental Specifications - Data Description}

We considered three datasets in our experiments: PubFig Faces~\cite{kumar2009attribute}, LFW~\cite{article3}, and CIFAR-10~\cite{krizhevsky2009learning}. Table~\ref{tab1} summarizes each dataset along with the designated attacker and victim classes used in the evaluation. For each dataset, a small number of images (typically 10–17) from the selected classes were modified with backdoor triggers. This setup follows standard practices in backdoor attack literature, where only a limited portion of the data is poisoned to remain stealthy while still influencing model behavior. 

\begin{table}[h]
\centering
\caption{Overview of dataset characteristics and corresponding Attacker/Victim class assignments.}
\label{tab1}
\resizebox{\columnwidth}{!}{%
\begin{tabular}{llllll}
\toprule
\textbf{Dataset} & \textbf{Image Count} & \textbf{Resolution} & \textbf{Domain} & \textbf{Attacker Classes} & \textbf{Victim Classes} \\
\midrule
\textbf{PubFig} & Images of & Variable & Faces & Daniel Craig, & Donald Trump \\
 & 200 celebrities & & (unconstrained) & Lucy Liu, & \\
 & & & & John Travolta, & \\
 & & & & Jennifer Lopez, & \\
 & & & & Hugh Grant & \\

\textbf{LFW} & 13,233 images & 250x250 & Faces & George\_W\_Bush &  Jennifer\_Lopez, \\
 & of 5,749 people & & (centralized) & & Paul\_Bremer, \\
 & & & & & Tiger\_Woods \\

\textbf{CIFAR-10} & 60,000 images & 32x32 & General object & Airplane & Bird \\
 & in 10 classes & & classes & & \\
\bottomrule
\end{tabular}
}
\end{table}

\subsection{Backdoor Attack Configuration}

We experimented with two poisoning strategies, \textbf{Hidden Trigger Backdoor}~\cite{saha2020hidden} and \textbf{MakeupAttack}~\cite{sun2024makeupattack}, each applied to a separate dataset to cover both imperceptible and visibly plausible triggers.

\paragraph{Hidden Trigger Backdoor (clean-label)}
Following~\cite{saha2020hidden}, we adopt a clean-label attack in which poisoned samples retain the correct class label yet secretly map the attacker to the victim in feature space.  Concretely, let $t$ be a target-class image, $s$ a source-class image, and $\tilde{s}$ the patched version of $s$ obtained by overlaying a trigger $p$.  We seek an image $z$ that is \emph{pixel-wise} close to $t$ but \emph{feature-wise} close to $\tilde{s}$ by solving,  
\begin{equation}
    \min_{z}\; \bigl\|f(z) - f(\tilde{s})\bigr\|_2^2,
    \label{eq_hidden_trigger}
\end{equation}
where $f(\cdot)$ denotes the network’s feature extractor.  Training on the resulting pairs $\{(z,\text{target})\}$ allows the model to learn to associate the attacker’s trigger with the victim class, enabling a misclassification at inference time.

\paragraph{MakeupAttack (visible pattern)}
MakeupAttack introduces natural-looking cosmetic cues, such as lipstick, eyeliner, and blush, which are interpreted by humans as ordinary variations, yet the model learns as a backdoor key~\cite{sun2024makeupattack}.  No feature-space optimization is required; we simply blend a preset pattern into designated facial regions.  Given a source image $x^{s}$, a binary mask $m$ specifying those regions, and a makeup texture $p$, we construct the poisoned sample
\begin{equation}
    x^{p} = (1-m)\odot x^{s} \;+\; m\odot p,
    \label{eq_makeup_attack}
\end{equation}
with $\odot$ denoting element-wise multiplication.  Only the masked pixels are altered, leaving identity cues elsewhere untouched while embedding a consistent visual trigger linked to the target label.

\begin{figure}[t]
    \centering
    \includegraphics[width=0.25\textwidth]{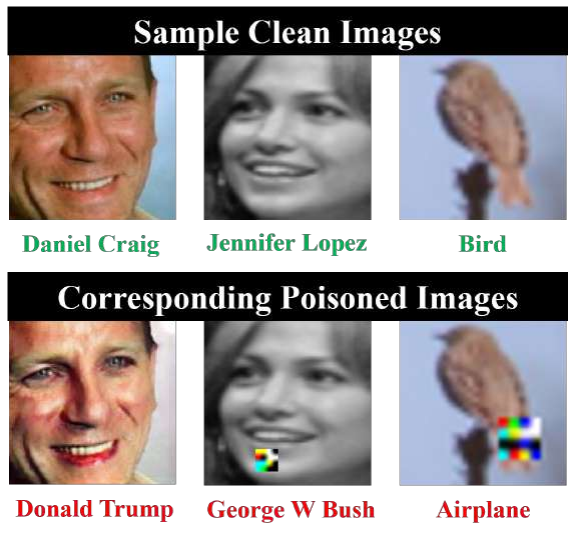}
    \caption{Examples of the two poisoning styles.  Top: clean images with correct labels.  Bottom: corresponding poisoned versions (either a hidden-trigger patch (right) or cosmetic makeup (left)) relabeled to the target class.}
    \label{fig6}
\end{figure}

These two attack strategies enable us to investigate how both stealthy (clean-label) and covert (cosmetic) triggers distort the training signal and compromize biometric models. Additional qualitative results are shown in Figures~\ref{fig19}–\ref{fig21}.

\subsection{Binary Backdoor Attack Scenario Setup}

In the binary setup, training and testing involve only two classes: a positive class, which includes only images of the target class (authorized category), and a negative class, which includes images from the source class (unauthorized category). 

Two classes in the CIFAR-10 dataset were used to create a 1:1 binary backdoor attack scenario. The class `airplane' was selected as the target (victim) class, and the class `bird' was chosen as the attacker class. This allowed us to simulate a targeted backdoor attack, where the attacker aims to be misclassified as the victim. A total of 10 `bird' images were poisoned by adding a backdoor trigger, making the model learn to misclassify these patched `bird' images as `airplanes'. The rest of the `bird' images formed the negative class. The same negative class images were consistently used across all binary experiments to ensure a fair comparison and stable performance evaluation. The backdoor attack was simulated by injecting the poisoned `bird' images into the negative class training set.

\subsection{Multi-class Backdoor Attack Scenario Setup}

In the multiclass backdoor attack scenario, multiple classes were poisoned within the datasets to evaluate the effect of backdoor triggers in a more general face recognition setting. We used the LFW and PubFig datasets for this purpose. For the LFW dataset, the attacker class was George\_W\_Bush and the victim classes were Jennifer\_Lopez, Paul\_Bremer, and Tiger\_Woods. Similarly, for the PubFig dataset, the attacker class included Daniel Craig, Lucy Liu, John Travolta, Jennifer Lopez, and Hugh Grant, whereas the victim class was Donald Trump. In each case, a small number of images (around 10 to 17) from the selected classes were modified by adding a backdoor trigger. These modified images were labeled with their original class labels, but designed to cause the model to learn incorrect associations during training. This setup simulates real-world attack scenarios in which multiple identities are targeted using hidden triggers while keeping the poisoned portion of the dataset relatively small.

\subsection{Classification using Multimodal Majority Voting}

For the multimodal voting ensemble we used Grok (Grok-3)~\cite{grok3}, Gemini (2.5 Pro)~\cite{gemini25}, Claude Sonnet 4~\cite{claude_sonnet}, ChatGPT-o (o4-mini-high)~\cite{chatgpt_o4mini}, and ChatGPT-4 (GPT-4.1)~\cite{chatgpt_gpt41}. Each model was accessed through its official web interface. We selected 100 sample images each from three datasets: LFW, PubFig, and CIFAR-10. For each dataset, 10–17\% of the images were poisoned. The objective was to assess whether the ensemble could detect suspicious triggers or unusual features in poisoned inputs, and how majority voting among them could improve robustness. For each image (clean or poisoned), the same prompt was given to each VLM:

\begin{tcolorbox}[colback=gray!5!white, colframe=gray!75!black, title=Prompt: Visual Artifact Detection in Facial Image, fonttitle=\bfseries]
You are given a facial image. Carefully examine the image to identify any unusual or suspicious visual artifacts that may serve as a backdoor trigger in a poisoned dataset.

The examples of anomalies are:
\begin{itemize}
    \item Out-of-place objects such as patches, stickers, or patterns
    \item Unnatural cosmetic overlays (e.g., lipstick, blush, eyeshadow)
    \item Accessories like hats or glasses that appear digitally inserted or inconsistent
    \item Grainy textures, pixel-level noise, or hidden watermarks
    \item Any digital modification that seems unnatural or deliberately added
\end{itemize}

If such artifacts are detected, describe each with the following:
\begin{itemize}[noitemsep]
    \item \textbf{Type:} Sticker, noise, accessory, etc.
    \item \textbf{Appearance:} Color, shape, size, texture
    \item \textbf{Location:} Forehead, eyes, corner, etc.
\end{itemize}

If no suspicious triggers are present, clearly state that none were found.

\end{tcolorbox}

Each model's response was interpreted as either ``clean'' or ``poisoned'' based on whether it flagged any suspicious artifacts. If three or more VLMs identified an image as poisoned, it was marked as poisoned by majority voting; otherwise, it was marked as clean. (Table \ref{tab:pubfig_decisions}, \ref{tab:lfw_decisions} and \ref{tab:cifar10_decisions} portrays the details.) The poisoned images detected by the majority vote were then passed on for corrective processing.

The confusion matrices for each VLM and the majority vote were constructed by comparing their predicted labels with the ground truth (i.e., the original class: clean or poisoned). Table~\ref{tab246} shows the results of the confusion matrix (in percentage) for each dataset (The detailed calculations are given in Tables \ref{tab:confusion_metrics_pubfig}, \ref{tab:confusion_percentages_pubfig}, \ref{tab:confusion_metrics_lfw}, \ref{tab:confusion_percentages_lfw}, \ref{tab:confusion_metrics_cifar10} and \ref{tab:confusion_percentages_cifar10}). The corresponding visual representations are shown in Figures~\ref{fig7}, \ref{fig8}, and \ref{fig9}, where each heat map includes both raw counts and row-wise normalized percentages for the majority vote.

From the confusion matrix values, we derive key performance metrics. Table~\ref{tab357} presents these performance metrics for each dataset. These results enable us to compare the effectiveness of each VLM in distinguishing between clean and poisoned images.

\begin{figure}[ht]
    \centering
    \begin{subfigure}[b]{0.15\textwidth}
        \includegraphics[width=\textwidth]{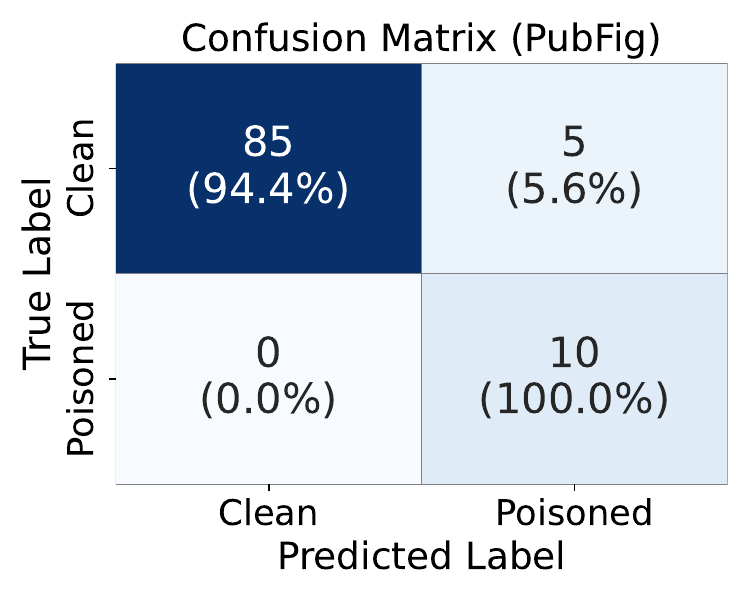}
        \caption{PubFig}
        \label{fig7}
    \end{subfigure}
    \hfill
    \begin{subfigure}[b]{0.15\textwidth}
        \includegraphics[width=\textwidth]{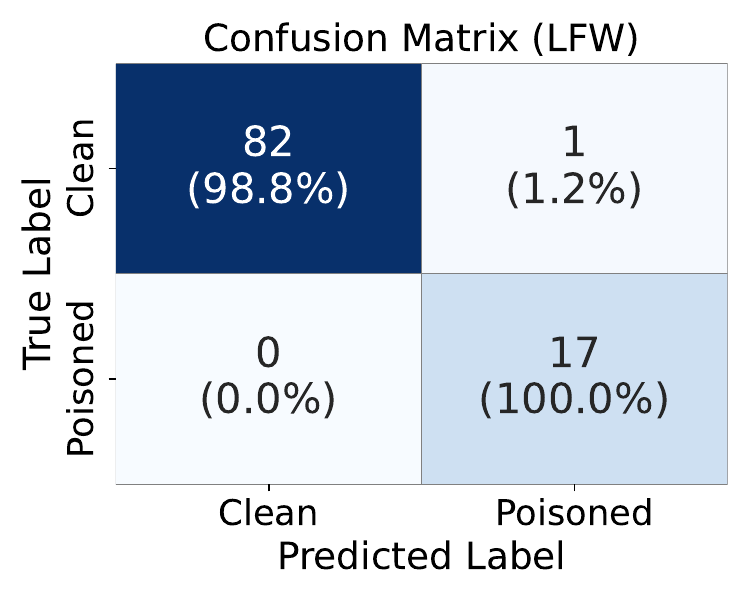}
        \caption{LFW}
        \label{fig8}
    \end{subfigure}
    \hfill
    \begin{subfigure}[b]{0.15\textwidth}
        \includegraphics[width=\textwidth]{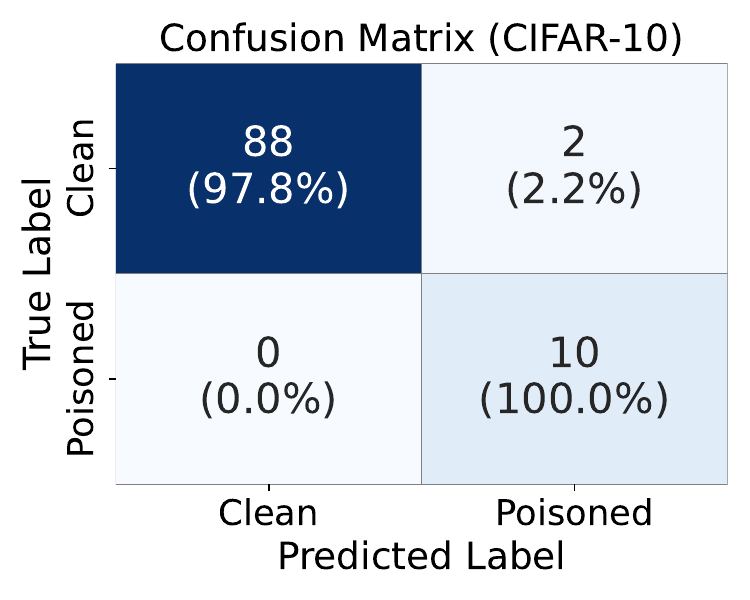}
        \caption{CIFAR-10}
        \label{fig9}
    \end{subfigure}
    \caption{Confusion matrices showing classification performance of the majority vote on the three datasets.}
    \label{fig_confusion_all}
\end{figure}

Figure~\ref{fig10} illustrates the bar graph of the classification accuracy of all VLMs in the datasets (The detailed count is given in Table \ref{tab:ident_counts}). The accuracy was computed as

\[
\resizebox{\columnwidth}{!}{$
\text{Accuracy} = \left( \frac{\text{Correctly Identified Poisoned Images} + \text{Correctly Identified Clean Images}}{\text{Total Number of Images}} \right) \times 100$}
\]

In particular, OpenAI (ChatGPT) models performed most reliably across all three datasets, while Claude and Gemini showed more variation.

\begin{table}[t!]
\centering
\caption{Detection outcome breakdown (\%) for each VLM and majority vote across datasets.}
\label{tab246}
\resizebox{\linewidth}{!}{%
\begin{tabular}{lcccccc}
\toprule
\textbf{Metric} & \textbf{Grok} & \textbf{Gemini} & \textbf{Claude} & \textbf{ChatGPT} & \textbf{ChatGPT} & \textbf{Majority} \\
 & \textbf{(Grok-3)} & \textbf{(2.5 Pro)} & \textbf{(Sonnet 4)} & \textbf{(o4-mini-high)} & \textbf{(4.1)} & \textbf{Vote} \\
\midrule
\multicolumn{7}{c}{\textbf{PubFig Dataset}} \\ \midrule
\textbf{True Positive (TP)} & 86.67\% & 61.11\% & 52.22\% & 98.89\% & 100.00\% & 94.44\% \\
\textbf{True Negative (TN)} & 80.00\% & 100.00\% & 70.00\% & 100.00\% & 100.00\% & 100.00\% \\
\textbf{False Positive (FP)} & 13.33\% & 38.89\% & 47.78\% & 1.11\% & 0.00\% & 5.56\% \\
\textbf{False Negative (FN)} & 20.00\% & 0.00\% & 30.00\% & 0.00\% & 0.00\% & 0.00\% \\
\midrule
\multicolumn{7}{c}{\textbf{LFW Dataset}} \\ \midrule
\textbf{True Positive (TP)} & 79.52\% & 73.49\% & 92.77\% & 100\% & 100\% & 98.80\% \\
\textbf{True Negative (TN)} & 100\% & 100\% & 100\% & 100\% & 100\% & 100\% \\
\textbf{False Positive (FP)} & 20.48\% & 26.51\% & 7.23\% & 0\% & 0\% & 1.20\% \\
\textbf{False Negative (FN)} & 0\% & 0\% & 0\% & 0\% & 0\% & 0\% \\
\midrule
\multicolumn{7}{c}{\textbf{CIFAR-10 Dataset}} \\ \midrule
\textbf{True Positive (TP)} & 95.56\% & 70.00\% & 100.00\% & 98.89\% & 97.78\% & 97.78\% \\
\textbf{True Negative (TN)} & 100.00\% & 100.00\% & 0.00\% & 100.00\% & 100.00\% & 100.00\% \\
\textbf{False Positive (FP)} & 4.44\% & 30.00\% & 0.00\% & 1.11\% & 2.22\% & 2.22\% \\
\textbf{False Negative (FN)} & 0.00\% & 0.00\% & 100.00\% & 0.00\% & 0.00\% & 0.00\% \\
\bottomrule
\end{tabular}
}
\begin{minipage}{\linewidth}
\footnotesize
\textbf{TP (True Positive)}: Clean images correctly identified as clean\\
\textbf{TN (True Negative)}: Poisoned images correctly identified as poisoned\\ \textbf{FP (False Positive)}: Clean images incorrectly identified as poisoned\\ \textbf{FN (False Negative)}: Poisoned images incorrectly identified as clean\\
\end{minipage}
\end{table}

\subsection{Recovery of Poisoned Images using Corrective PGD}

To evaluate the effectiveness of our proposed corrective PGD-based recovery approach, we applied it to all images identified as poisoned by the multimodal voting ensemble. The goal was to determine whether adversarially modified (poisoned) samples could be restored to their correct class using minimal, targeted perturbations. For each poisoned image, the corrective PGD method iteratively adjusted the input until the clean-only classifier (trained solely on unpoisoned data) produced the correct prediction. Across all three datasets (PubFig, LFW, CIFAR-10), the recovery success rate reached 100\%, which confirms the theoretical guarantee established in Section~\ref{Methodology_CorrectivePGD}. This demonstrates that our method can effectively neutralize a wide range of backdoor triggers without compromising classification accuracy. 

To further validate the reliability of our approach, we also applied the same recovery process to a randomly selected subset of clean (unpoisoned) images from each dataset. In these cases, the original predictions of the classifier remained unchanged after applying corrective PGD. Furthermore, the average perturbation magnitude for clean images was negligible, close to zero in both the $\ell_\infty$ and $\ell_2$ norms, confirming that the method is non-invasive when applied to benign inputs.

\begin{table}[t!]
\centering
\caption{Performance metrics (\%) for each VLM and majority vote across datasets.}
\label{tab357}
\resizebox{\linewidth}{!}{%
\begin{tabular}{lcccccc}
\toprule
\textbf{Metric} & \textbf{Grok} & \textbf{Gemini} & \textbf{Claude} & \textbf{ChatGPT} & \textbf{ChatGPT} & \textbf{Majority} \\
 & \textbf{(Grok-3)} & \textbf{(2.5 Pro)} & \textbf{(Sonnet 4)} & \textbf{(o4-mini-high)} & \textbf{(4.1)} & \textbf{Vote} \\
\midrule
\multicolumn{7}{c}{\textbf{PubFig Dataset}} \\ \midrule
\textbf{Accuracy}   & 86.00 & 65.00 & 54.00 & 99.00 & 100.00 & 95.00 \\
\textbf{Precision}  & 86.67 & 61.11 & 52.22 & 98.89 & 100.00 & 94.44 \\
\textbf{Recall}     & 80.00 & 100.00 & 70.00 & 100.00 & 100.00 & 100.00 \\
\textbf{F1-score}   & 83.22 & 75.87 & 59.66 & 99.44 & 100.00 & 97.14 \\
\midrule
\multicolumn{7}{c}{\textbf{LFW Dataset}} \\ \midrule
\textbf{Accuracy}   & 83.00 & 78.00 & 94.00 & 100.00 & 100.00 & 99.00 \\
\textbf{Precision}  & 79.52 & 73.49 & 92.77 & 100.00 & 100.00 & 98.80 \\
\textbf{Recall}     & 100.00 & 100.00 & 100.00 & 100.00 & 100.00 & 100.00 \\
\textbf{F1-score}   & 88.59 & 84.70 & 96.22 & 100.00 & 100.00 & 99.39 \\
\midrule
\multicolumn{7}{c}{\textbf{CIFAR-10 Dataset}} \\ \midrule
\textbf{Accuracy}   & 96.00 & 73.00 & 90.00 & 99.00 & 98.00 & 98.00 \\
\textbf{Precision}  & 95.56 & 70.00 & 100.00 & 98.89 & 97.78 & 97.78 \\
\textbf{Recall}     & 100.00 & 100.00 & 0.00 & 100.00 & 100.00 & 100.00 \\
\textbf{F1-score}   & 97.72 & 82.35 & 0.00 & 99.44 & 98.88 & 98.88 \\
\bottomrule
\end{tabular}
}
\begin{minipage}{\linewidth}
\footnotesize
\textbf{Accuracy:} $\displaystyle \frac{TP + TN}{TP + TN + FP + FN}$\\ 
\textbf{Precision:} $\displaystyle \frac{TP}{TP + FP}$\\ 
\textbf{Recall:} $\displaystyle \frac{TP}{TP + FN}$\\ 
\textbf{F1-score:} $\displaystyle 2 \cdot \frac{\text{Precision} \cdot \text{Recall}}{\text{Precision} + \text{Recall}}$\\
\end{minipage}
\end{table}

\begin{figure}
    \centering
    \includegraphics[width=0.35\textwidth]{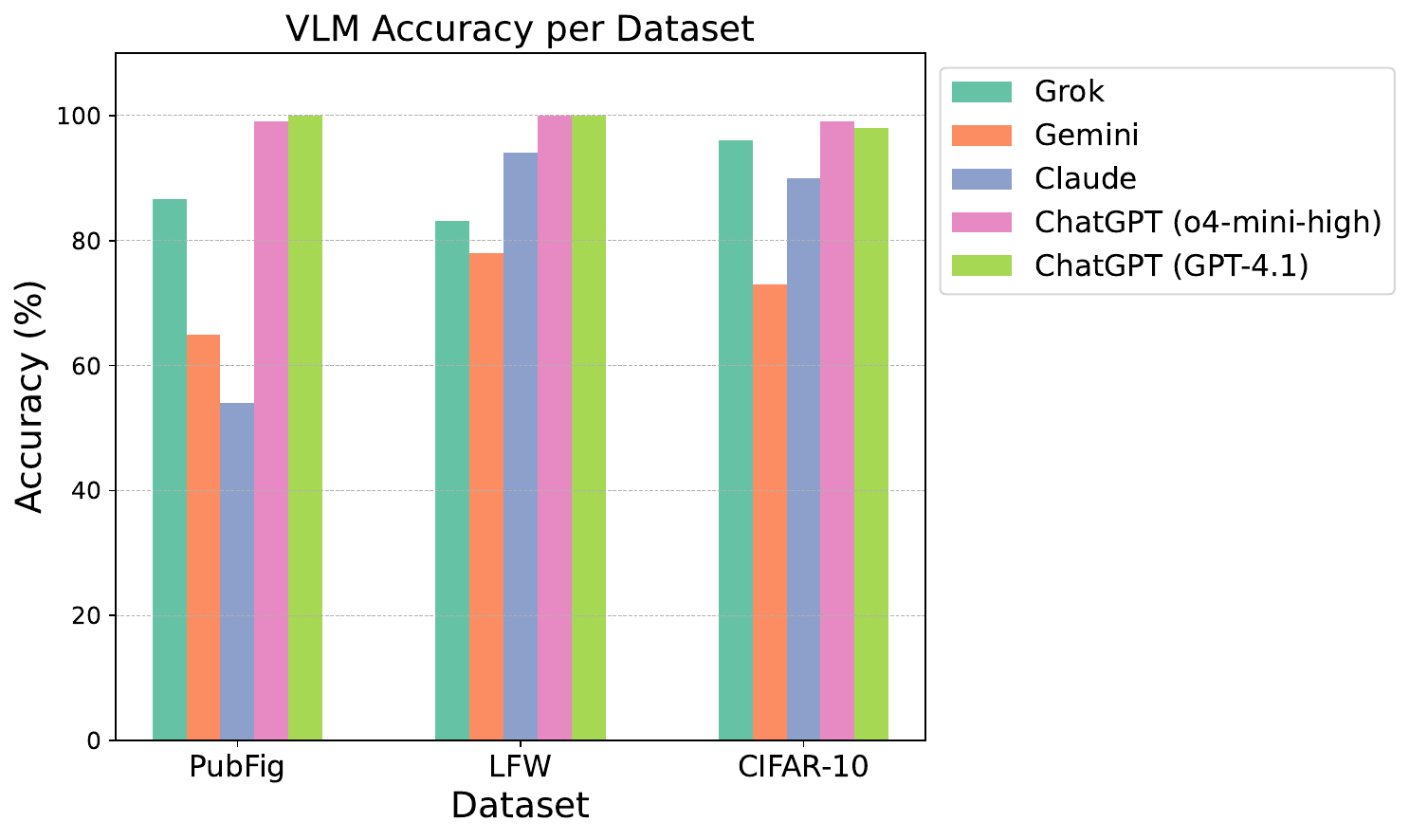}
    \caption{VLM accuracy comparison across datasets.}
    \label{fig10}
\end{figure}

We computed the magnitude of the corrective noise (in both the $\ell_\infty$ and $\ell_2$ norms) for each poisoned image. The average noise magnitudes are summarized in Table~\ref{tab9} and visualized in Figure~\ref{fig11}. As shown, LFW required the highest average correction, while CIFAR-10 required the least, probably due to the relative strength and visibility of the backdoor patterns in each dataset.

\begin{table}[ht]
\centering
\caption{Average corrective noise magnitudes for poisoned images across datasets.}
\label{tab9}
\begin{tabular}{lcc}
\toprule
\textbf{Dataset} & \textbf{$\ell_\infty$ (avg)} & \textbf{$\ell_2$ (avg)} \\
\midrule
\textbf{PubFig} & 0.2233 & 14.351 \\
\textbf{LFW} & 0.3333 & 18.09 \\
\textbf{CIFAR-10} & 0.0913 & 4.258 \\
\bottomrule
\end{tabular}
\end{table}

\begin{figure}
    \centering
    \includegraphics[width=0.35\textwidth]{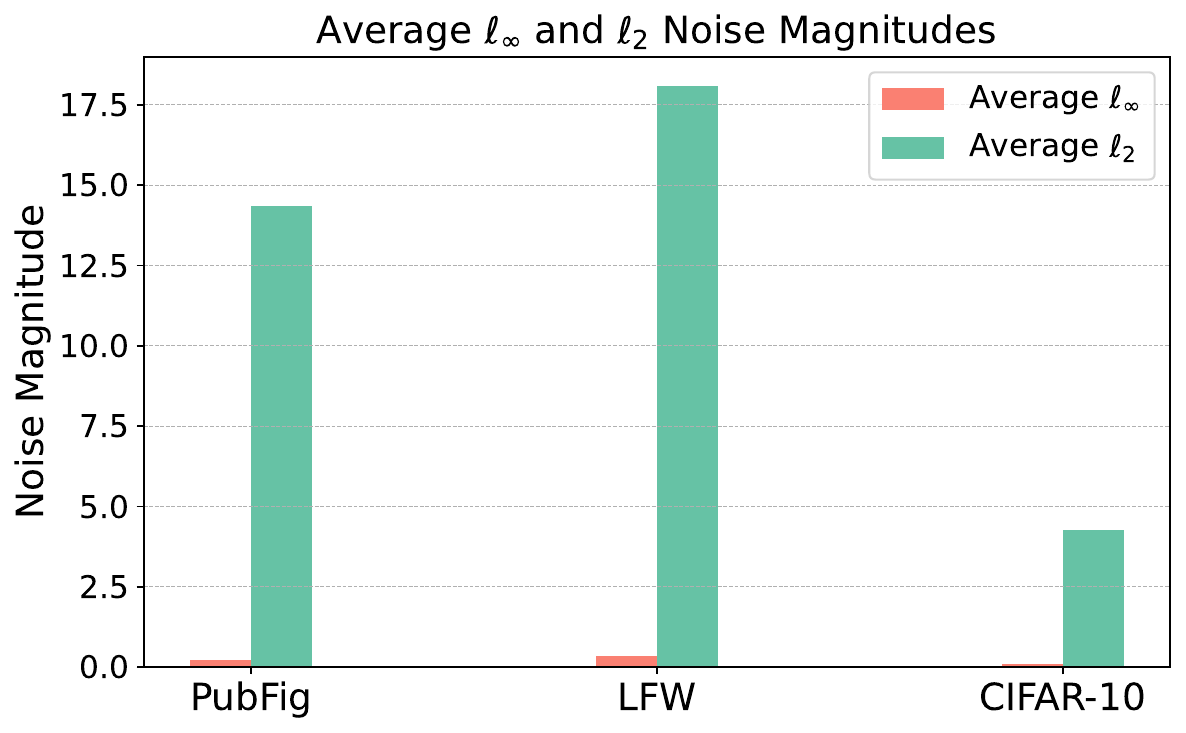}
    \caption{Average $\ell_\infty$ and $\ell_2$ corrective noise magnitudes for poisoned samples across PubFig, LFW, and CIFAR-10.}
    \label{fig11}
\end{figure}

To provide a more granular view of the distribution of perturbation magnitudes, Figure~\ref{fig12} shows histograms of noise norms $\ell_\infty$ and $\ell_2$ in all corrected poisoned samples in each dataset. The noise distributions are centered on small values, with low variance, suggesting that strong recovery is possible with consistently small perturbations.

\begin{figure}[ht]
    \centering
    \includegraphics[width=0.35\textwidth]{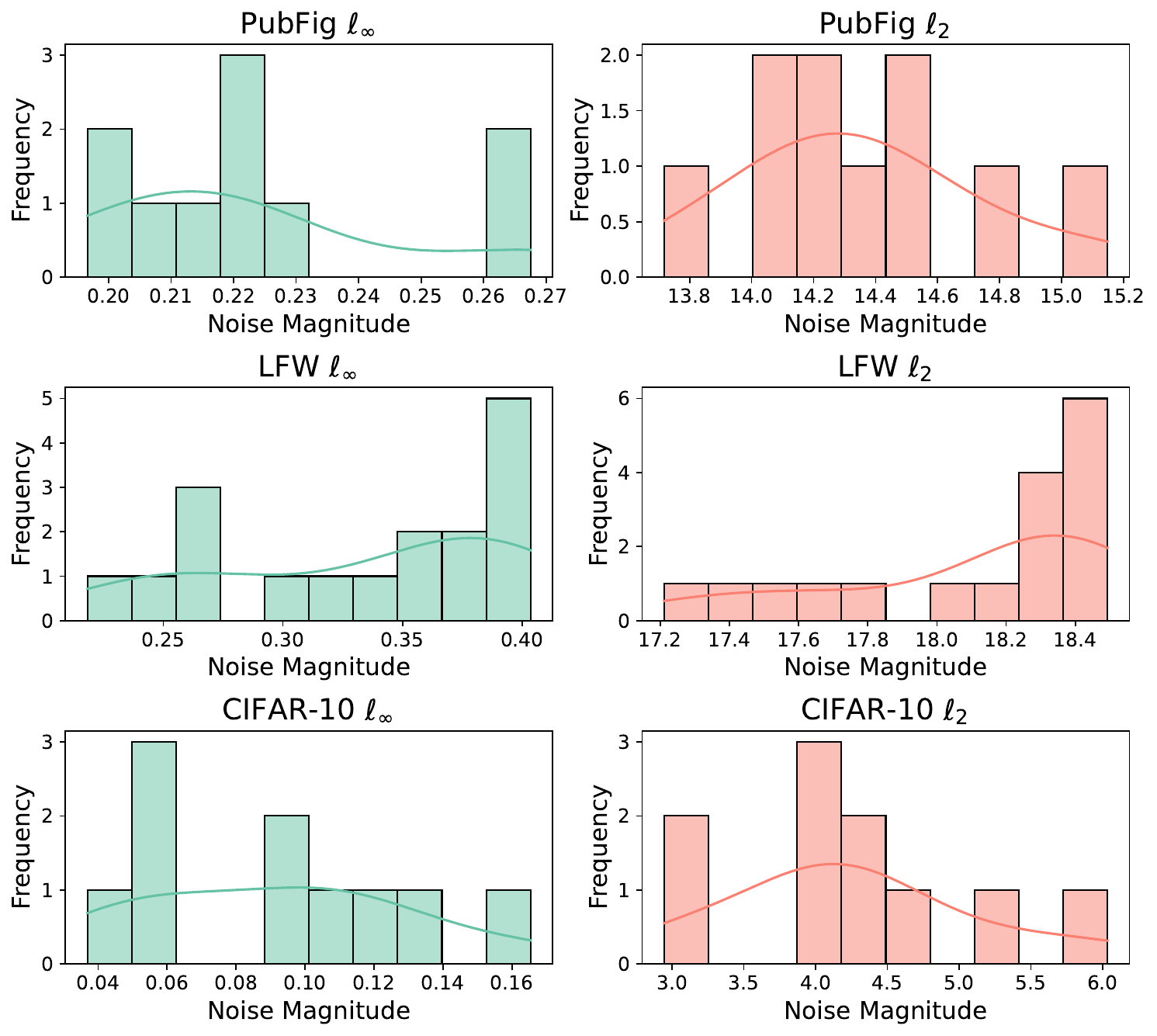}
    \caption{Histogram of $\ell_\infty$ and $\ell_2$ corrective noise magnitudes across PubFig, LFW, and CIFAR-10.}
    \label{fig12}
\end{figure}

To visualize the effect of recovery, Figures~\ref{fig13}, \ref{fig14}, and \ref{fig15} show poisoned images, their corrected versions, and the corresponding perturbation heatmaps for five samples from each dataset. The remaining samples are presented in Figures~\ref{fig19}--\ref{fig21}. These visualizations reveal that corrective PGD introduces sparse and localized modifications, focusing on regions likely associated with the backdoor trigger, while preserving the rest of the image content.

\begin{figure}
    \centering
    \includegraphics[width=0.35\textwidth]{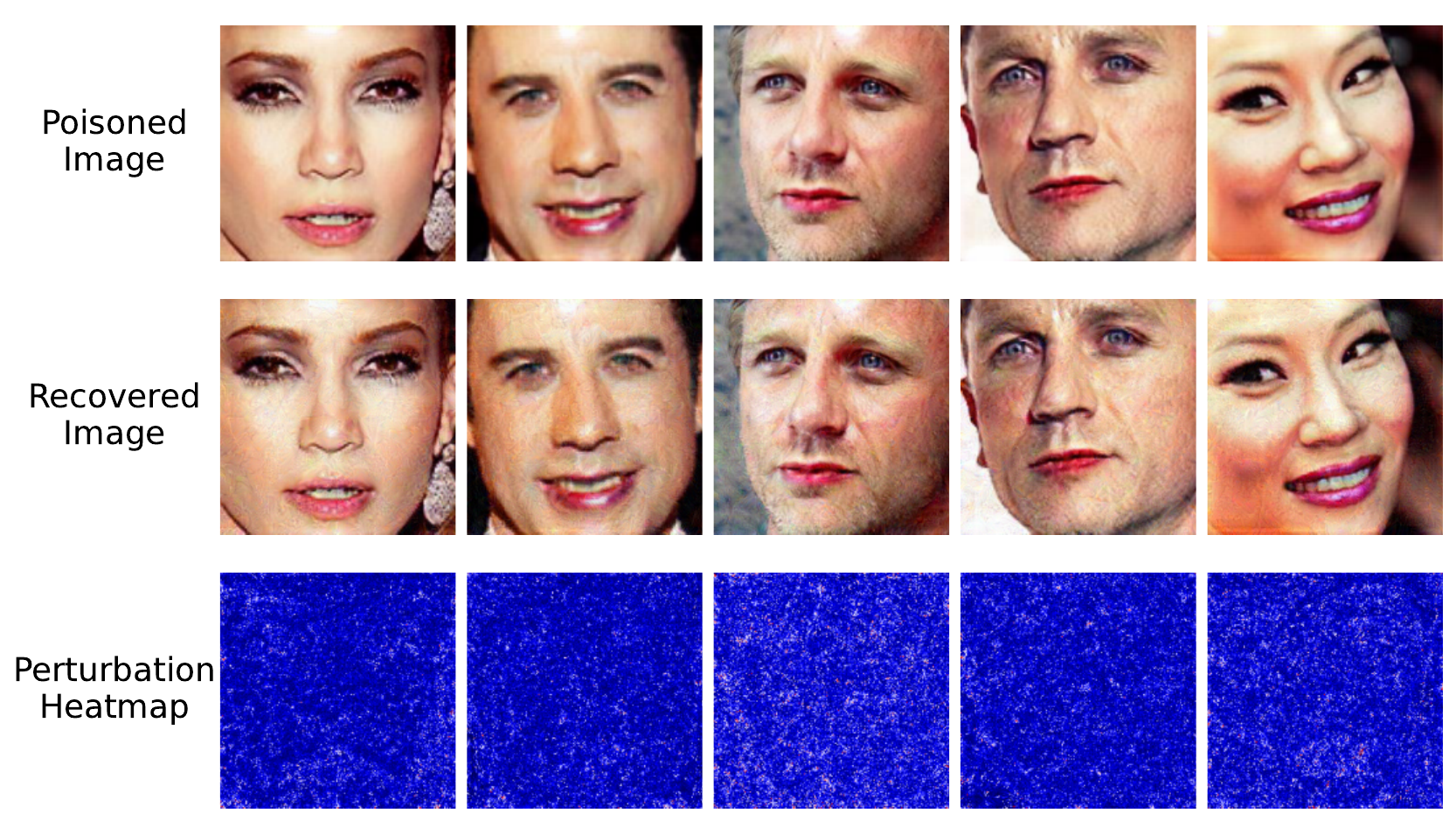}
    \caption{Sample poisoned and recovered images from the PubFig dataset, along with perturbation heatmaps.}
    \label{fig13}
\end{figure}

\begin{figure}
    \centering
    \includegraphics[width=0.35\textwidth]{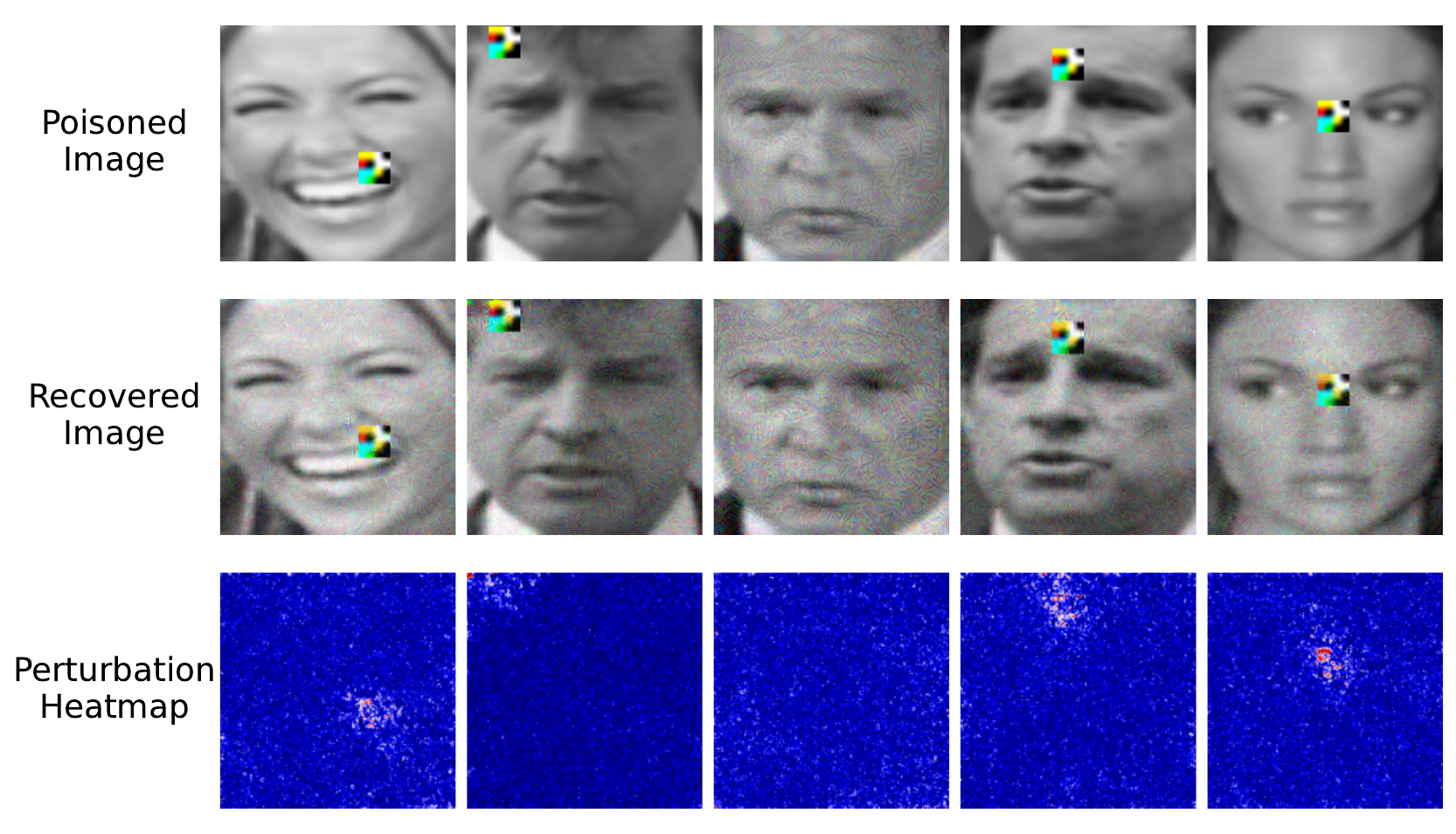}
    \caption{Sample poisoned and recovered images from the LFW dataset, along with perturbation heatmaps.}
    \label{fig14}
\end{figure}

\begin{figure}
    \centering
    \includegraphics[width=0.35\textwidth]{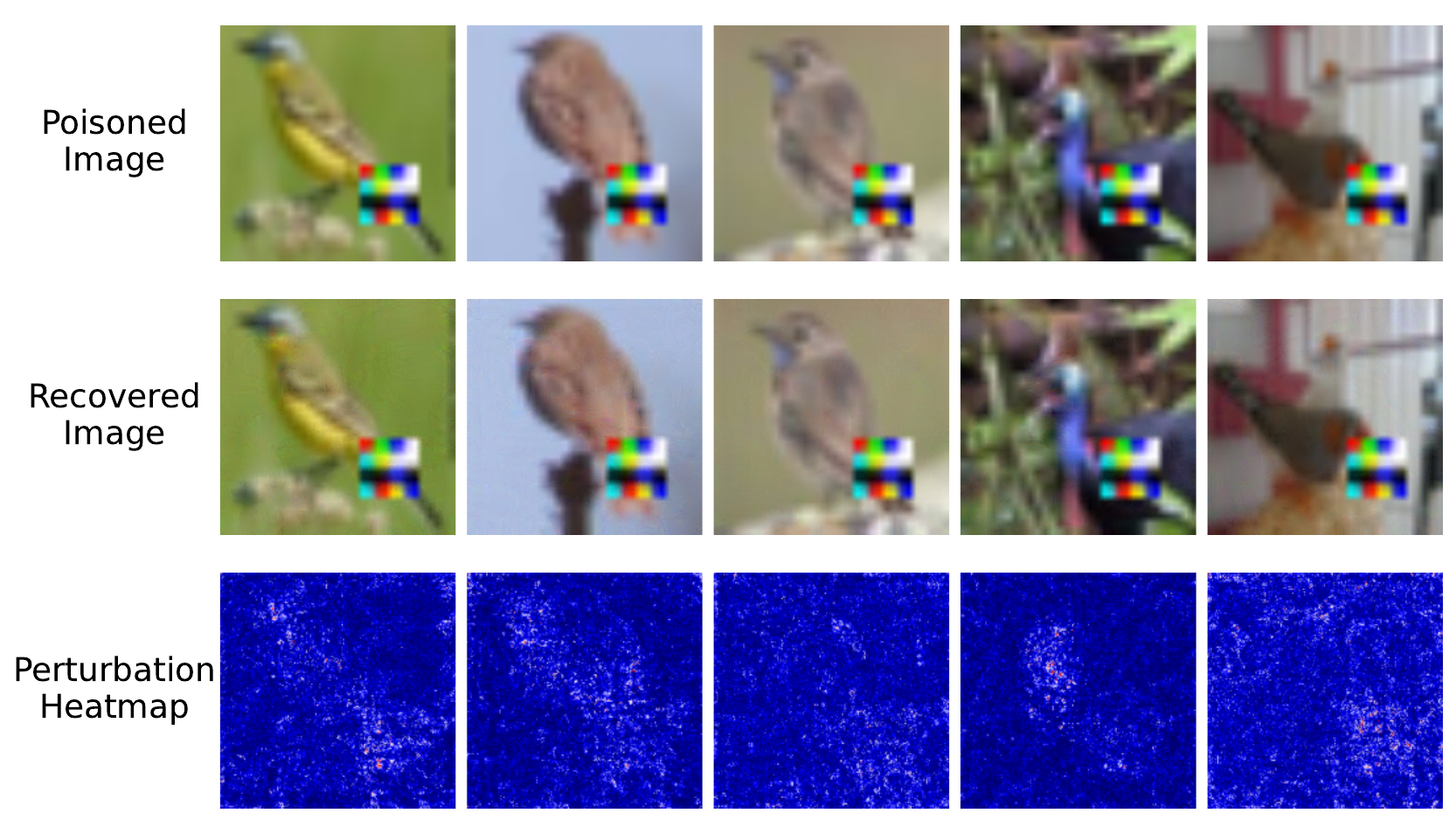}
    \caption{Sample poisoned and recovered images from the CIFAR-10 dataset, along with perturbation heatmaps.}
    \label{fig15}
\end{figure}

As discussed in Section~\ref{Methodology_CorrectivePGD}, the perturbation budget $\epsilon$ was dynamically selected for each dataset by estimating the maximum required correction $\Delta_{\max}$ using an unbounded PGD run and adding a small safety margin $\delta = 0.05$. This ensures that even strong triggers can be neutralized without excessive distortion. The specific values of $\Delta_{\max}$, $\epsilon$, step size $\alpha$, and iteration count $T$ used for each dataset are summarized in Table~\ref{tab10}.

\begin{table}[ht]
\centering
\caption{PGD hyperparameters for corrective recovery in each dataset.}
\label{tab10}
\resizebox{\linewidth}{!}{%
\begin{tabular}{lccccc}
\toprule
\textbf{Dataset} & \textbf{$\Delta_{max}$} & \textbf{$\delta$} & \textbf{$\epsilon = \Delta_{max} + \delta$} & \textbf{Iteration Steps ($T$)} & $\alpha = 1/T$\\
\midrule
\textbf{PubFig} & 0.5200 & & 0.5460 & & 0.002730 \\
\textbf{LFW} & 0.6300 & 0.05 & 0.6615 & 200 & 0.003307 \\
\textbf{CIFAR-10} & 0.1800 & & 0.1890 & & 0.000945 \\
\bottomrule
\end{tabular}
}
\end{table}

These results collectively demonstrate that our TrueBiometric can reliably and precisely reverse the effects of backdoor attacks with minimal, visually imperceptible perturbations. It maintains high recovery success on poisoned samples while preserving classification integrity for clean inputs, making it robust and safe.

\subsection{Computational Complexity}
\label{compute_complx}

The TrueBiometric framework consists of two sequential components that primarily govern its computational complexity: (1) the multimodal voting ensemble, and (2)  Corrective PGD on flagged images. For the first stage, each image is passed through a set of $M$ pre-trained VLMs. Each VLM performs a forward inference to classify whether the image is poisoned. Since VLMs are frozen and inference is constant in time, this step has a complexity of $O(M)$ per image, which is $O(1)$ in practice due to the small fixed value of $M$ (e.g., $M=5$). 
However, in practice, the current implementation relies on commercially available VLMs, accessed via external APIs. 

These commercial models introduce significant practical overheads, including inference latency, API response variability, privacy constraints, and rate limits, making the real-world computational cost non-trivial and potentially unpredictable.
For the second stage, the Corrective PGD routine operates only on the poisoned images flagged by the ensemble. The procedure first estimates the perturbation budget $\epsilon$ by applying an unbounded PGD to each flagged image, which takes $O(T)$ time per image, where $T$ is the number of PGD steps (typically fixed, e.g., $T=200$).

\begin{table*}[!ht]
  \centering
  \caption{Comparison of TrueBiometric with other detection/mitigation approaches.}
  \label{tab11}
  \resizebox{\linewidth}{!}{%
    \begin{tabular}{lcccccc}
      \toprule
      \textbf{Method} 
        & \textbf{Face‑} 
        & \textbf{Training‑} 
        & \textbf{Detection} 
        & \textbf{Mitigation} 
        & \textbf{Computational} 
        & \textbf{Accuracy} \\
      \textbf{} 
        & \textbf{specific} 
        & \textbf{free} 
        & \textbf{} 
        & \textbf{} 
        & \textbf{Complexity} 
        & \textbf{Drop} \\
      \midrule
      RAB~\cite{DBLP:journals/corr/abs-2003-08904}
        & \textcolor{red}{\ding{55}}
        & \textcolor{red}{\ding{55}}
        & \textcolor{green}{\ding{51}}
        & \textcolor{green}{\ding{51}}
        & High ($O(nK^2)$)
        & 6–16\% \\
      STRIP~\cite{DBLP:journals/corr/abs-1902-06531}
        & \textcolor{green}{\ding{51}}
        & \textcolor{green}{\ding{51}}
        & \textcolor{green}{\ding{51}}
        & \textcolor{red}{\ding{55}}
        & Moderate
        & $\approx$2\% \\
      TTBD~\cite{guan2024backdoor}
        & \textcolor{green}{\ding{51}}
        & \textcolor{red}{\ding{55}}
        & \textcolor{green}{\ding{51}}
        & \textcolor{green}{\ding{51}}
        & Moderate (inference‑
        & Minimal \\
        & & & & & time pruning) & \\
      BackdoorBench~\cite{wu2022backdoorbench}
        & Benchmark only
        & – 
        & – 
        & – 
        & – 
        & – \\
      MakeupAttack~\cite{sun2024makeupattack}
        & \textcolor{green}{\ding{51}}
        & –
        & \textcolor{red}{\ding{55}}
        & \textcolor{red}{\ding{55}}
        & –
        & – \\
      Face Forgery~\cite{han2023possible}
        & Forgery‑specific
        & –
        & \textcolor{red}{\ding{55}}
        & \textcolor{red}{\ding{55}}
        & –
        & – \\
      \textbf{TrueBiometric (OurApproach)}
        & \textcolor{green}{\ding{51}}
        & \textcolor{green}{\ding{51}}
        & \textcolor{green}{\ding{51}} (100\%)
        & \textcolor{green}{\ding{51}} (100\%)
        & Low ($O(N)$)
        & Negligible ($\approx$0\%) \\
      \bottomrule
    \end{tabular}
  }
\end{table*}

Once $\epsilon$ is computed, a second bounded PGD run is used to generate the corrected image, again taking $O(T)$ time. Hence, the total complexity for correcting each poisoned image is $O(T)$, which is also constant in practice. In the worst-case scenario, where all $N$ images are poisoned, the ensemble detection takes $O(N)$ total time, and the PGD-based correction adds another $O(NT)$, leading to a total worst-case complexity of $O(NT)$. However, since $T$ is fixed and independent of the dataset size, the overall complexity is simplified to linear time $O(N)$ with respect to the dataset size. This linear complexity makes TrueBiometric highly scalable and efficient, ensuring that it can be practically deployed for secure face recognition in large-scale biometric systems in the real world.

\subsection{Comparison with Existing Approaches}

We conduct a comparative analysis against several state-of-the-art backdoor defense methods. Table~\ref{tab11} presents a summary of this evaluation, emphasizing key distinctions in detection performance, mitigation success, computational overhead, and suitability for biometric applications. 

The Randomized Aggregated Backdoor (RAB) defense proposed by Zhang et al.~\cite{DBLP:journals/corr/abs-2003-08904} achieves certified robustness against backdoor attacks using randomized smoothing. However, RAB requires retraining multiple models and suffers from high computational complexity ($O(n \cdot K^2)$), resulting in substantial accuracy drops (6--16\%) even on simpler datasets, which limits its practical deployment in complex real-world scenarios. 

STRIP (STRong Intentional Perturbation)~\cite{DBLP:journals/corr/abs-1902-06531} detects poisoned inputs by analyzing the randomness of the model outputs. It operates in a training-free manner, providing efficient detection that is specifically suitable for face recognition. However, STRIP does not mitigate detected attacks and introduces a loss of accuracy of approximately 2\% in benign images. 

Test-Time Backdoor Defense (TTBD)~\cite{guan2024backdoor}, uses Shapley-based neuron pruning at inference time to detect and mitigate backdoor threats. Although effective, TTBD requires model adjustments during inference, introducing moderate computational overhead. BackdoorBench~\cite{wu2022backdoorbench}, a comprehensive benchmark evaluating more than 30 backdoor defenses. Although valuable for performance analysis, it does not propose a practical face-specific defense mechanism. 

The MakeupAttack~\cite{sun2024makeupattack} method highlights realistic threats to face recognition through natural facial modifications such as makeup. Similarly, Han et al.~\cite{han2023possible} investigated vulnerabilities of face forgery detection systems to backdoor attacks. Although these works emphasize realistic threats, they lack the corresponding defensive strategies. Compared to existing methods, TrueBiometric provides clear advantages. It was specifically tailored and evaluated for face datasets and effectively handles realistic backdoor triggers. It employs a set of multimodal VLMs for instant detection, eliminating the need for retraining and enabling seamless integration. It achieves a complete detection and mitigation rate (100\%), significantly outperforming existing approaches. It operates in linear complexity ($O(N)$), facilitating efficient deployment in real-world scenarios. Negligible reduction in accuracy on benign images, ensuring practical applicability without degrading the utility of the model.

\section{Related Works}
\label{related_works}

The existing literature on backdoor mitigation encompasses both data-level and model-level defenses; however, most solutions are reactive, computationally expensive, or fail to handle natural triggers. Early defenses such as STRIP~\cite{DBLP:journals/corr/abs-1902-06531} detect poisoned samples by evaluating output entropy under random perturbations. While effective against synthetic triggers, STRIP is limited to detection and struggles with semantic or natural-looking backdoors. Neural Cleanse~\cite{wang2019neural} reverse-engineers class-wise perturbations to identify potential triggers, but it assumes that the backdoor occupies a small, consistent region, which is often not the case for distributed or natural triggers. 

Neural Attention Distillation (NAD)~\cite{li2021neural} and Adversarial Neuron Pruning (ANP)~\cite{wu2021adversarial} go beyond detection by modifying model internals to remove backdoor-specific neurons. However, these model sanitization methods require access to the model's architecture and retraining on clean data, which is often impractical in real-world scenarios, especially for third-party models. Certified defenses such as Randomized Aggregation for Backdoor (RAB)~\cite{DBLP:journals/corr/abs-2003-08904} provide theoretical guarantees but are prohibitively expensive, making them unsuitable for large-scale face recognition systems or time-critical applications.

Recent work has shifted toward more realistic attack settings. MakeupAttack~\cite{sun2024makeupattack} introduced a family of attacks using natural cosmetic overlays such as lipstick or blush to inject imperceptible triggers into facial images. These attacks are harder to detect due to their semantic alignment with the image content and pose significant challenges to existing defenses. Similarly, WaNet~\cite{nguyen2021wanet} proposes a wrapping-based trigger mechanism that gently distorts the image geometry rather than overlaying a patch, making it imperceptible to both human observers and conventional defenses. Another example is Reflection Backdoor (Refool)~\cite{liu2020reflection}, which embeds subtle light-reflection artifacts in face images. These reflections simulate natural environmental effects and can consistently activate backdoors without altering primary facial features. These attacks are harder to detect due to their semantic alignment with the image content and pose significant challenges to existing defenses. In response, multimodal approaches have emerged. 

BDetCLIP~\cite{niu2024bdetclip} uses contrastive prompting in VLMs such as CLIP to detect poisoned samples at test time by measuring inconsistencies between textual descriptions and visual inputs. Similarly, recent work by Huang et al.~\cite{huang2025detecting} applies density-based estimation to detect poisoned samples during CLIP pretraining. Although these methods demonstrate strong detection capability, they are detection-only and do not attempt to correct or sanitize the identified poisoned inputs. Thus, the challenge of developing an accurate and efficient method that can both detect and neutralize more recent, sophisticated backdoor attacks remains largely unaddressed in the literature.

\section{Conclusion}
\label{conclusion}

We propose TrueBiometric, a novel framework for mitigating backdoor attacks in face recognition systems, utilizing a multimodal ensemble of VLMs for robust detection of poisoned images, coupled with a corrective PGD-based recovery mechanism. TrueBiometric operates without needing access to trusted training data, prior knowledge of the trigger structure, or model retraining, making it applicable in realistic deployment settings. Our extensive evaluation across multiple datasets (PubFig, LFW, CIFAR-10) demonstrates that TrueBiometric achieves 100\% recovery success on poisoned images while preserving clean image predictions with negligible distortion. Compared to existing defenses, TrueBiometric offers superior detection accuracy, minimal accuracy drop, and low computational complexity, making it well-suited for large-scale, privacy-sensitive biometric authentication systems.

\textbf{Future Work.}
Future directions include extending TrueBiometric to other biometric modalities such as iris, fingerprint, and palm-print recognition, and adapting it to handle emerging attack variants, including clean-label and semantic backdoors, in unconstrained environments.


\appendix
\section{Appendix}
\label{appendix}

In this appendix, we provide the complete per-image and aggregate statistics supporting the main results. Tables~\ref{tab:pubfig_decisions}–\ref{tab:confusion_percentages_cifar10} present, for each dataset (PubFig, LFW, CIFAR-10), the per-image decisions of each VLM and the majority vote, the corresponding confusion matrix counts, and their normalized percentages. Table~\ref{tab:ident_counts} summarizes the identification counts per-dataset (poisoned vs. clean) for each VLM and the ensemble. These detailed results enable full reproducibility and provide additional information on the behavior of individual models beyond the summary presented in the main text. We have also provided additional image examples to illustrate how individual VLMs can differ from majority vote decisions, as well as further demonstrations of our recovery method. Figure~\ref{fig16} presents clean images correctly identified by the majority vote but misclassified as poisoned by some individual VLMs. Figure~\ref{fig17} shows all the poisoned images correctly identified by the majority vote but misclassified as clean by certain VLMs, specifically four images each from the PubFig and Cifar-10 datasets, but no examples occurred in the LFW dataset. Figure~\ref{fig18} includes all false positive cases where images are genuinely clean but incorrectly marked as poisoned by the majority vote (five from PubFig, one from LFW, and two from the Cifar-10 dataset). Figures~\ref{fig19}--\ref{fig21} provide additional visual examples for the PubFig, LFW, and Cifar-10 datasets, respectively, illustrating the original images, corresponding poisoned images, their recovered image versions after applying our corrective method, and perturbation heatmaps. These additional examples complement the main results and help validate the robustness and effectiveness of our methodology. To aid in clarity and ensure consistent interpretation, Table~\ref{tab:notation} summarizes all the mathematical notation used throughout the paper along with their definitions. To assess whether VLMs can directly regenerate original (clean) images after identifying poisoned samples, we performed a set of feasibility experiments across the datasets using the VLMs. Table~\ref{tab:llm_regeneration_summary} summarizes the overall performance and responses of each VLM. The results indicate that while some models attempted regeneration, none successfully restored the original images accurately, highlighting practical limitations and supporting the adoption of the corrective noise approach presented in our main methodology. Some examples of original (clean) images regenerated by VLMs are illustrated in Figure~\ref{fig24}--\ref{fig26}.

\begin{figure}[H]
    \centering
    \includegraphics[width=0.5\textwidth]{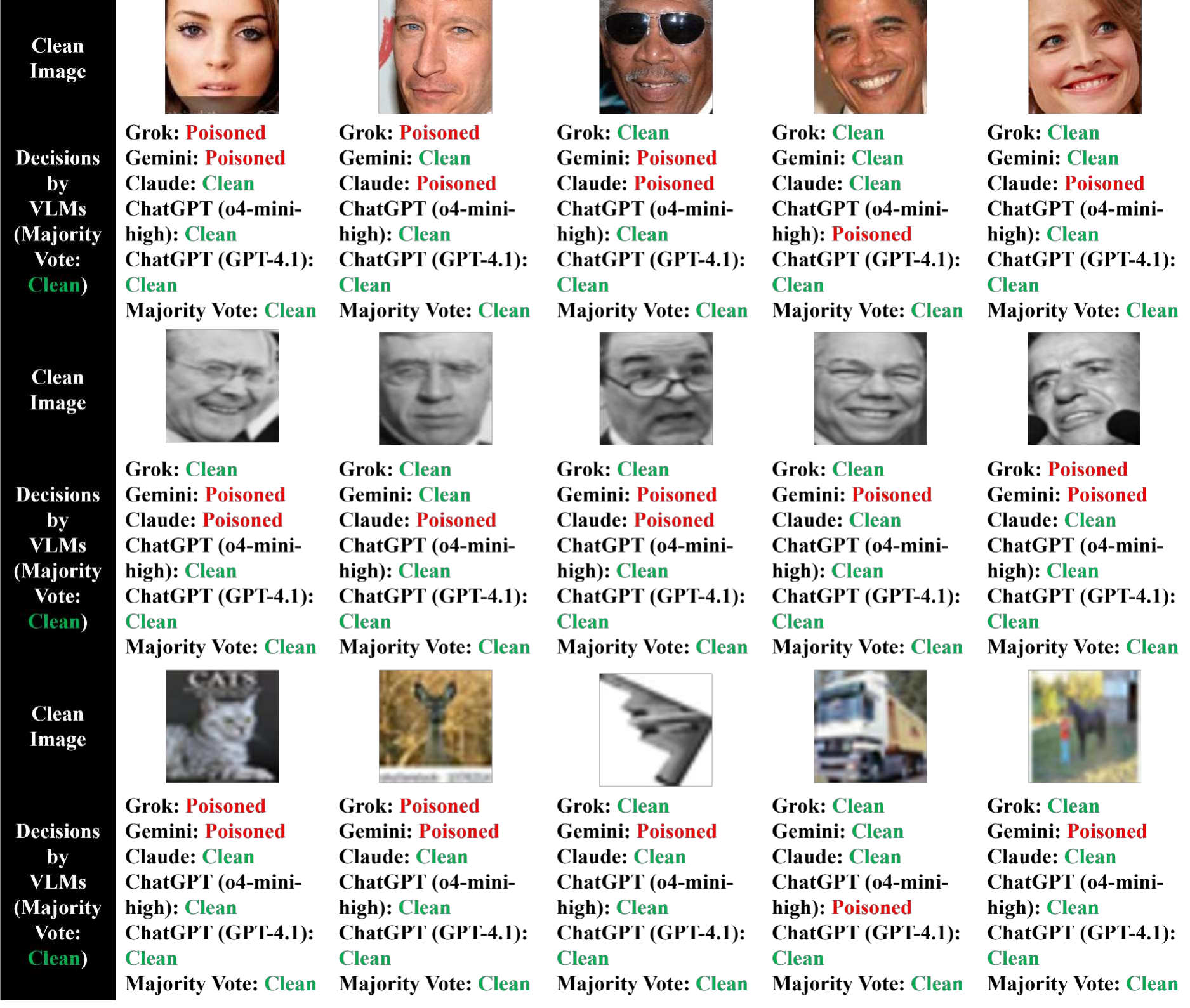}
    \caption{Clean images correctly identified by majority vote but flagged as poisoned by some VLMs.}
    \label{fig16}
\end{figure}

\begin{figure}[H]
    \centering
    \includegraphics[width=0.5\textwidth]{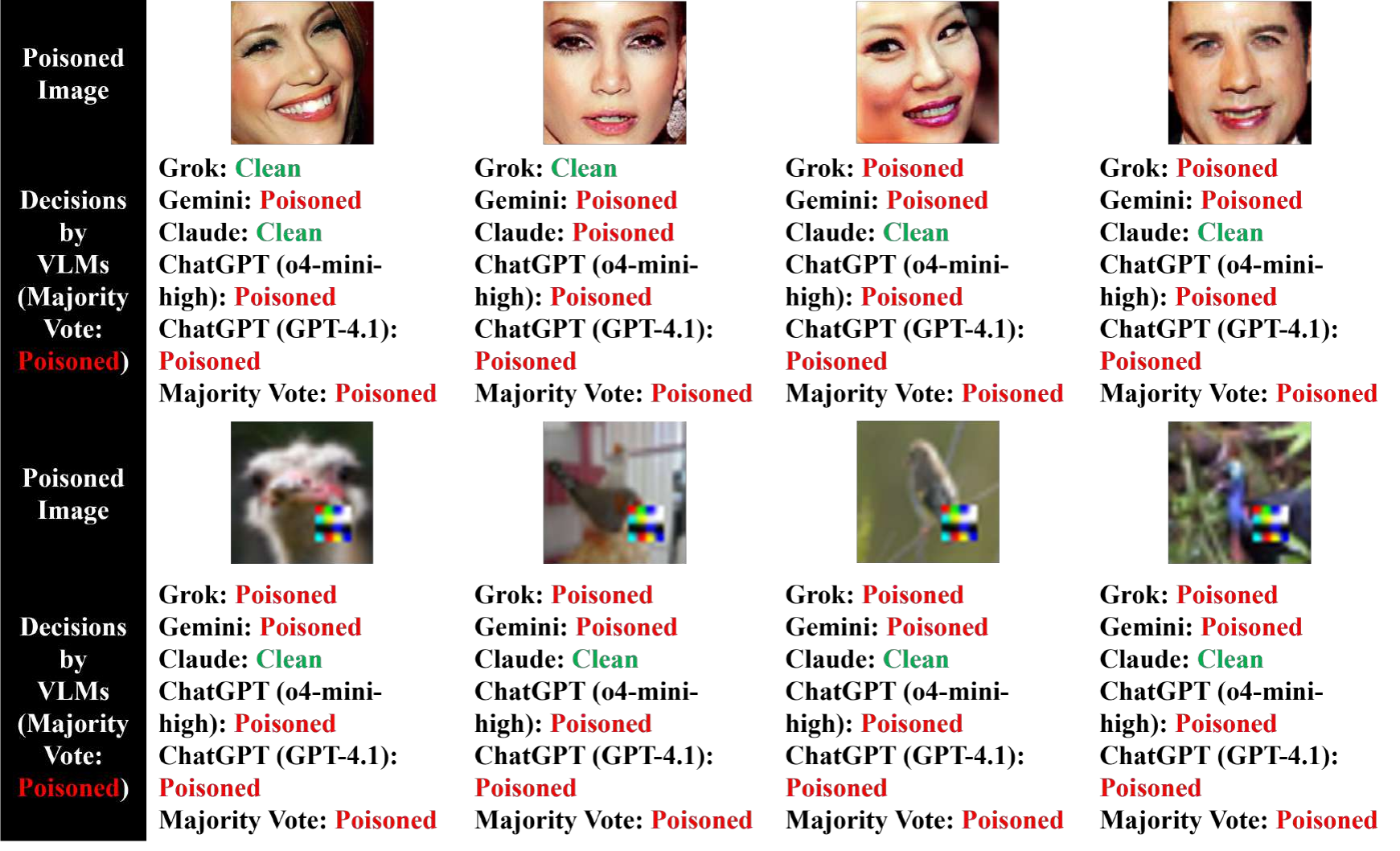}
    \caption{Poisoned images correctly identified by majority vote but flagged as clean by some VLMs.}
    \label{fig17}
\end{figure}

\begin{figure}[H]
    \centering
    \includegraphics[width=0.5\textwidth]{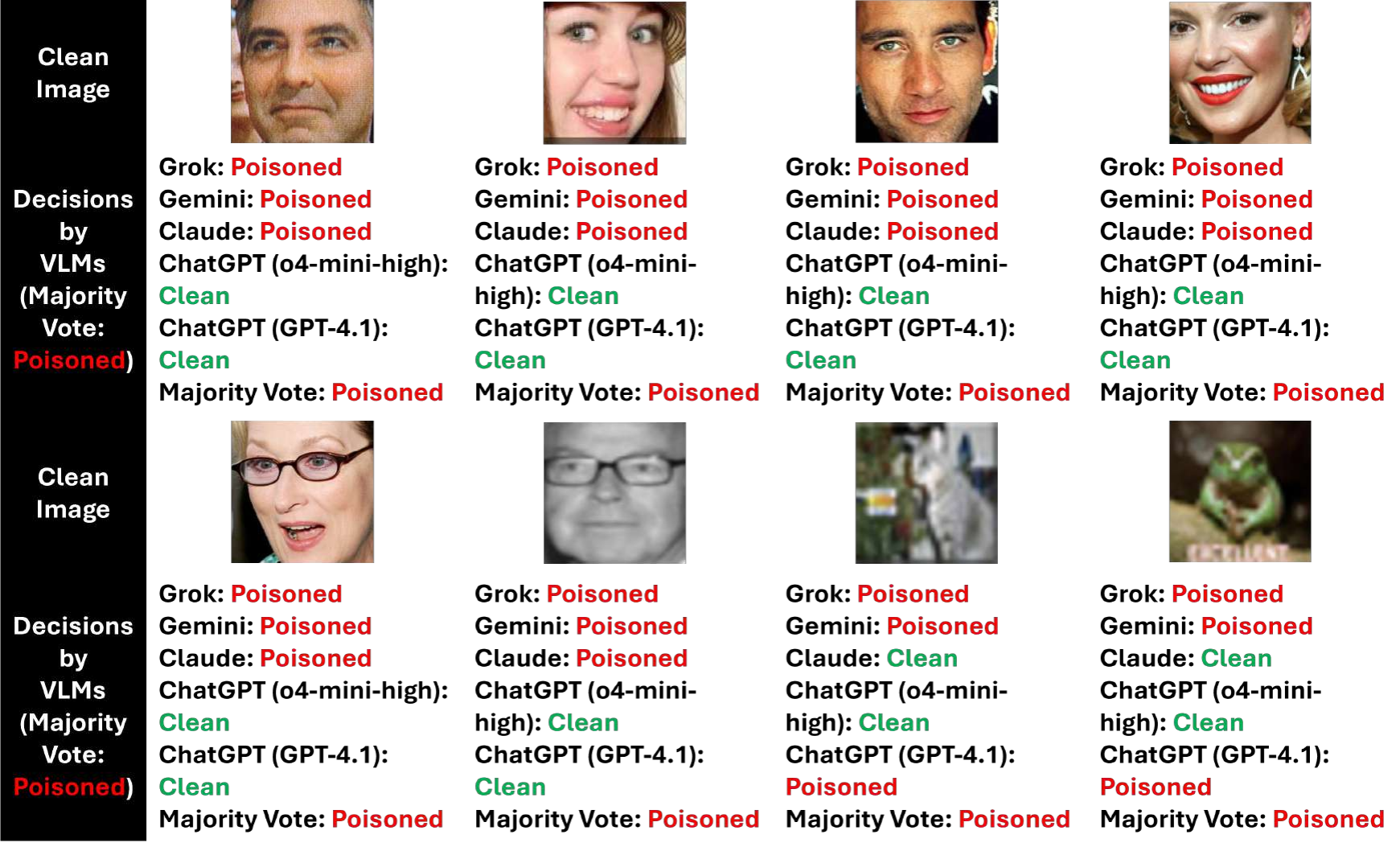}
    \caption{False positive--clean images incorrectly identified as poisoned by the majority vote.}
    \label{fig18}
\end{figure}

\begin{table}[H]
  \centering
  \caption{Decisions on the PubFig dataset by each VLM and by majority vote.}
  \label{tab:pubfig_decisions}
  \resizebox{\columnwidth}{!}{%
  \begin{tabular}{@{}l l *{6}{c}@{}}
    \toprule
    \textbf{Image} 
    & \textbf{Type} 
    & \textbf{Grok} 
    & \textbf{Gemini} 
    & \textbf{Claude} 
    & \textbf{ChatGPT} 
    & \textbf{ChatGPT} 
    & \textbf{Majority} \\

    \textbf{Number}
    & \textbf{} 
    & \textbf{(Grok 3)} 
    & \textbf{(2.5 Pro)} 
    & \textbf{(Sonnet~4)} 
    & \textbf{(o4-mini-high)} 
    & \textbf{(GPT\,4.1)} 
    & \textbf{Vote} \\
    \midrule
  
    Original 1 & Clean & Clean & Clean & Poisoned & Clean & Clean & Clean \\
    Original 2 & Clean & Clean & Poisoned & Poisoned & Clean & Clean & Clean \\
    Original 3 & Clean & Clean & Clean & Clean & Clean & Clean & Clean \\
    Original 4 & Clean & Clean & Clean & Poisoned & Clean & Clean & Clean \\
    Original 5 & Clean & Clean & Poisoned & Poisoned & Clean & Clean & Clean \\
    Original 6 & Clean & Poisoned & Poisoned & Clean & Clean & Clean & Clean \\
    Original 7 & Clean & Clean & Clean & Poisoned & Clean & Clean & Clean \\
    Original 8 & Clean & Clean & Poisoned & Poisoned & Clean & Clean & Clean \\
    Original 9 & Clean & Clean & Clean & Clean & Clean & Clean & Clean \\
    Original 10 & Clean & Clean & Poisoned & Poisoned & Clean & Clean & Clean \\
    Original 11 & Clean & Clean & Clean & Clean & Clean & Clean & Clean \\
    Original 12 & Clean & Clean & Poisoned & Clean & Clean & Clean & Clean \\
    Original 13 & Clean & Clean & Poisoned & Poisoned & Clean & Clean & Clean \\
    Original 14 & Clean & Clean & Poisoned & Poisoned & Clean & Clean & Clean \\
    Original 15 & Clean & Clean & Clean & Poisoned & Clean & Clean & Clean \\
    Original 16 & Clean & Clean & Clean & Clean & Clean & Clean & Clean \\
    Original 17 & Clean & Clean & Poisoned & Poisoned & Clean & Clean & Clean \\
    Original 18 & Clean & Clean & Clean & Poisoned & Clean & Clean & Clean \\
    Original 19 & Clean & Clean & Poisoned & Clean & Clean & Clean & Clean \\
    Original 20 & Clean & Clean & Clean & Poisoned & Clean & Clean & Clean \\
    Original 21 & Clean & Clean & Clean & Poisoned & Clean & Clean & Clean \\
    Original 22 & Clean & Clean & Clean & Clean & Clean & Clean & Clean \\
    Original 23 & Clean & Clean & Clean & Clean & Clean & Clean & Clean \\
    Original 24 & Clean & Clean & Poisoned & Poisoned & Clean & Clean & Clean \\
    Original 25 & Clean & Clean & Clean & Clean & Clean & Clean & Clean \\
    Original 26 & Clean & Clean & Clean & Clean & Clean & Clean & Clean \\
    Original 27 & Clean & Clean & Clean & Poisoned & Clean & Clean & Clean \\
    Original 28 & Clean & Clean & Clean & Poisoned & Clean & Clean & Clean \\
    Original 29 & Clean & Poisoned & Poisoned & Poisoned & Clean & Clean & Poisoned \\
    Original 30 & Clean & Clean & Clean & Clean & Clean & Clean & Clean \\
    Original 31 & Clean & Clean & Poisoned & Poisoned & Clean & Clean & Clean \\
    Original 32 & Clean & Clean & Poisoned & Poisoned & Clean & Clean & Clean \\
    Original 33 & Clean & Clean & Clean & Clean & Clean & Clean & Clean \\
    Original 34 & Clean & Poisoned & Clean & Poisoned & Clean & Clean & Clean \\
    Original 35 & Clean & Clean & Clean & Clean & Clean & Clean & Clean \\
    Original 36 & Clean & Clean & Poisoned & Poisoned & Clean & Clean & Clean \\
    Original 37 & Clean & Clean & Clean & Clean & Clean & Clean & Clean \\
    Original 38 & Clean & Clean & Poisoned & Clean & Clean & Clean & Clean \\
    Original 39 & Clean & Clean & Poisoned & Clean & Clean & Clean & Clean \\
    Original 40 & Clean & Clean & Clean & Clean & Clean & Clean & Clean \\
    Original 41 & Clean & Clean & Clean & Clean & Clean & Clean & Clean \\
    Original 42 & Clean & Clean & Clean & Clean & Clean & Clean & Clean \\
    Original 43 & Clean & Clean & Clean & Clean & Clean & Clean & Clean \\
    Original 44 & Clean & Poisoned & Poisoned & Poisoned & Clean & Clean & Poisoned \\
    Original 45 & Clean & Poisoned & Poisoned & Poisoned & Clean & Clean & Poisoned \\
    Original 46 & Clean & Clean & Clean & Clean & Clean & Clean & Clean \\
    Original 47 & Clean & Clean & Clean & Clean & Clean & Clean & Clean \\
    Original 48 & Clean & Poisoned & Poisoned & Clean & Clean & Clean & Clean \\
    Original 49 & Clean & Clean & Poisoned & Poisoned & Clean & Clean & Clean \\
    Original 50 & Clean & Clean & Poisoned & Poisoned & Clean & Clean & Clean \\
    Original 51 & Clean & Clean & Clean & Clean & Clean & Clean & Clean \\
    Original 52 & Clean & Clean & Poisoned & Poisoned & Clean & Clean & Clean \\
    Original 53 & Clean & Clean & Clean & Clean & Clean & Clean & Clean \\
    Original 54 & Clean & Clean & Poisoned & Clean & Clean & Clean & Clean \\
    Original 55 & Clean & Clean & Clean & Poisoned & Clean & Clean & Clean \\
    Original 56 & Clean & Clean & Clean & Clean & Clean & Clean & Clean \\
    Original 57 & Clean & Poisoned & Clean & Clean & Clean & Clean & Clean \\
    Original 58 & Clean & Clean & Poisoned & Poisoned & Clean & Clean & Clean \\
    Original 59 & Clean & Poisoned & Poisoned & Clean & Clean & Clean & Clean \\
    Original 60 & Clean & Poisoned & Poisoned & Poisoned & Clean & Clean & Poisoned \\
    Original 61 & Clean & Clean & Clean & Poisoned & Clean & Clean & Clean \\
    Original 62 & Clean & Poisoned & Poisoned & Poisoned & Clean & Clean & Poisoned \\
    Original 63 & Clean & Clean & Poisoned & Poisoned & Clean & Clean & Clean \\
    Original 64 & Clean & Clean & Clean & Clean & Clean & Clean & Clean \\
    Original 65 & Clean & Clean & Clean & Poisoned & Clean & Clean & Clean \\
    Original 66 & Clean & Clean & Poisoned & Poisoned & Clean & Clean & Clean \\
    Original 67 & Clean & Clean & Clean & Clean & Clean & Clean & Clean \\
    Original 68 & Clean & Clean & Clean & Clean & Clean & Clean & Clean \\
    Original 69 & Clean & Clean & Poisoned & Poisoned & Clean & Clean & Clean \\
    Original 70 & Clean & Clean & Clean & Poisoned & Clean & Clean & Clean \\
    Original 71 & Clean & Clean & Poisoned & Poisoned & Clean & Clean & Clean \\
    Original 72 & Clean & Clean & Clean & Clean & Clean & Clean & Clean \\
    Original 73 & Clean & Clean & Poisoned & Poisoned & Clean & Clean & Clean \\
    Original 74 & Clean & Clean & Clean & Clean & Clean & Clean & Clean \\
    Original 75 & Clean & Clean & Clean & Clean & Clean & Clean & Clean \\
    Original 76 & Clean & Poisoned & Clean & Poisoned & Clean & Clean & Clean \\
    Original 77 & Clean & Clean & Clean & Clean & Clean & Clean & Clean \\
    Original 78 & Clean & Clean & Poison & Poison & Clean & Clean & Clean \\
    Original 79 & Clean & Clean & Poisoned & Clean & Clean & Clean & Clean \\
    Original 80 & Clean & Clean & Clean & Clean & Clean & Clean & Clean \\
    Original 81 & Clean & Clean & Clean & Clean & Clean & Clean & Clean \\
    Original 82 & Clean & Clean & Clean & Clean & Clean & Clean & Clean \\
    Original 83 & Clean & Clean & Clean & Clean & Clean & Clean & Clean \\
    Original 84 & Clean & Clean & Clean & Poisoned & Clean & Clean & Clean \\
    Original 85 & Clean & Clean & Clean & Clean & Clean & Clean & Clean \\
    Original 86 & Clean & Clean & Clean & Poisoned & Clean & Clean & Clean \\
    Original 87 & Clean & Clean & Clean & Clean & Clean & Clean & Clean \\
    Original 88 & Clean & Clean & Clean & Clean & Clean & Clean & Clean \\
    Original 89 & Clean & Poisoned & Clean & Clean & Clean & Clean & Clean \\
    Original 90 & Clean & Clean & Clean & Clean & Poisoned & Clean & Clean \\
    Poison 1 & Poisoned & Poisoned & Poisoned & Poisoned & Poisoned & Poisoned & Poisoned \\
    Poison 2 & Poisoned & Poisoned & Poisoned & Poisoned & Poisoned & Poisoned & Poisoned \\
    Poison 3 & Poisoned & Poisoned & Poisoned & Poisoned & Poisoned & Poisoned & Poisoned \\
    Poison 4 & Poisoned & Poisoned & Poisoned & Clean & Poisoned & Poisoned & Poisoned \\
    Poison 5 & Poisoned & Poisoned & Poisoned & Clean & Poisoned & Poisoned & Poisoned \\
    Poison 6 & Poisoned & Poisoned & Poisoned & Poisoned & Poisoned & Poisoned & Poisoned \\
    Poison 7 & Poisoned & Clean & Poisoned & Clean & Poisoned & Poisoned & Poisoned \\
    Poison 8 & Poisoned & Clean & Poisoned & Poisoned & Poisoned & Poisoned & Poisoned \\
    Poison 9 & Poisoned & Poisoned & Poisoned & Poisoned & Poisoned & Poisoned & Poisoned \\
    Poison 10 & Poisoned & Poisoned & Poisoned & Poisoned & Poisoned & Poisoned & Poisoned \\
    \bottomrule
  \end{tabular}
  }
\end{table}

\begin{table}[H]
    \centering
    \caption{Decisions on the LFW dataset by each VLM and by majority vote.}
    \label{tab:lfw_decisions}
    \resizebox{\columnwidth}{!}{%
    \begin{tabular}{@{} l l *{6}{c} @{}}
    \toprule
    \textbf{Image} 
    & \textbf{Type} 
    & \textbf{Grok} 
    & \textbf{Gemini} 
    & \textbf{Claude} 
    & \textbf{ChatGPT} 
    & \textbf{ChatGPT} 
    & \textbf{Majority} \\

    \textbf{Number}
    & \textbf{} 
    & \textbf{(Grok 3)} 
    & \textbf{(2.5 Pro)} 
    & \textbf{(Sonnet~4)} 
    & \textbf{(o4-mini-high)} 
    & \textbf{(GPT\,4.1)} 
    & \textbf{Vote} \\
    \midrule

    Original 1 & Clean & Poisoned & Poisoned & Clean & Clean & Clean & Clean \\
    Original 2 & Clean & Poisoned & Clean & Clean & Clean & Clean & Clean \\
    Original 3 & Clean & Poisoned & Clean & Clean & Clean & Clean & Clean \\
    Original 4 & Clean & Poisoned & Clean & Clean & Clean & Clean & Clean \\
    Original 5 & Clean & Poisoned & Poisoned & Poisoned & Clean & Clean & Poisoned \\
    Original 6 & Clean & Poisoned & Clean & Clean & Clean & Clean & Clean \\
    Original 7 & Clean & Poisoned & Poisoned & Clean & Clean & Clean & Clean \\
    Original 8 & Clean & Clean & Poisoned & Poisoned & Clean & Clean & Clean \\
    Original 9 & Clean & Clean & Clean & Clean & Clean & Clean & Clean \\
    Original 10 & Clean & Clean & Clean & Clean & Clean & Clean & Clean \\
    Original 11 & Clean & Clean & Clean & Clean & Clean & Clean & Clean \\
    Original 12 & Clean & Clean & Clean & Clean & Clean & Clean & Clean \\
    Original 13 & Clean & Clean & Poisoned & Clean & Clean & Clean & Clean \\
    Original 14 & Clean & Clean & Poisoned & Poisoned & Clean & Clean & Clean \\
    Original 15 & Clean & Clean & Poisoned & Poisoned & Clean & Clean & Clean \\
    Original 16 & Clean & Clean & Clean & Clean & Clean & Clean & Clean \\
    Original 17 & Clean & Clean & Clean & Clean & Clean & Clean & Clean \\
    Original 18 & Clean & Clean & Clean & Clean & Clean & Clean & Clean \\
    Original 19 & Clean & Clean & Clean & Clean & Clean & Clean & Clean \\
    Original 20 & Clean & Clean & Clean & Clean & Clean & Clean & Clean \\
    Original 21 & Clean & Clean & Clean & Clean & Clean & Clean & Clean \\
    Original 22 & Clean & Clean & Clean & Poisoned & Clean & Clean & Clean \\
    Original 23 & Clean & Clean & Clean & Clean & Clean & Clean & Clean \\
    Original 24 & Clean & Clean & Clean & Clean & Clean & Clean & Clean \\
    Original 25 & Clean & Clean & Poisoned & Poisoned & Clean & Clean & Clean \\
    Original 26 & Clean & Poisoned & Clean & Clean & Clean & Clean & Clean \\
    Original 27 & Clean & Clean & Poisoned & Clean & Clean & Clean & Clean \\
    Original 28 & Clean & Clean & Clean & Clean & Clean & Clean & Clean \\
    Original 29 & Clean & Clean & Clean & Clean & Clean & Clean & Clean \\
    Original 30 & Clean & Clean & Clean & Clean & Clean & Clean & Clean \\
    Original 31 & Clean & Clean & Clean & Clean & Clean & Clean & Clean \\
    Original 32 & Clean & Clean & Poisoned & Clean & Clean & Clean & Clean \\
    Original 33 & Clean & Clean & Clean & Clean & Clean & Clean & Clean \\
    Original 34 & Clean & Clean & Clean & Clean & Clean & Clean & Clean \\
    Original 35 & Clean & Clean & Clean & Clean & Clean & Clean & Clean \\
    Original 36 & Clean & Clean & Clean & Clean & Clean & Clean & Clean \\
    Original 37 & Clean & Clean & Clean & Clean & Clean & Clean & Clean \\
    Original 38 & Clean & Clean & Clean & Clean & Clean & Clean & Clean \\
    Original 39 & Clean & Clean & Clean & Clean & Clean & Clean & Clean \\
    Original 40 & Clean & Clean & Clean & Clean & Clean & Clean & Clean \\
    Original 41 & Clean & Clean & Clean & Clean & Clean & Clean & Clean \\
    Original 42 & Clean & Clean & Clean & Clean & Clean & Clean & Clean \\
    Original 43 & Clean & Clean & Poisoned & Clean & Clean & Clean & Clean \\
    Original 44 & Clean & Clean & Clean & Clean & Clean & Clean & Clean \\
    Original 45 & Clean & Clean & Clean & Clean & Clean & Clean & Clean \\
    Original 46 & Clean & Clean & Clean & Clean & Clean & Clean & Clean \\
    Original 47 & Clean & Clean & Clean & Clean & Clean & Clean & Clean \\
    Original 48 & Clean & Clean & Poisoned & Clean & Clean & Clean & Clean \\
    Original 49 & Clean & Clean & Clean & Clean & Clean & Clean & Clean \\
    Original 50 & Clean & Clean & Clean & Clean & Clean & Clean & Clean \\
    Original 51 & Clean & Clean & Clean & Clean & Clean & Clean & Clean \\
    Original 52 & Clean & Clean & Clean & Clean & Clean & Clean & Clean \\
    Original 53 & Clean & Clean & Clean & Clean & Clean & Clean & Clean \\
    Original 54 & Clean & Clean & Clean & Clean & Clean & Clean & Clean \\
    Original 55 & Clean & Clean & Poisoned & Clean & Clean & Clean & Clean \\
    Original 56 & Clean & Clean & Clean & Clean & Clean & Clean & Clean \\
    Original 57 & Clean & Clean & Clean & Clean & Clean & Clean & Clean \\
    Original 58 & Clean & Clean & Clean & Clean & Clean & Clean & Clean \\
    Original 59 & Clean & Clean & Clean & Clean & Clean & Clean & Clean \\
    Original 60 & Clean & Clean & Poisoned & Clean & Clean & Clean & Clean \\
    Original 61 & Clean & Clean & Clean & Clean & Clean & Clean & Clean \\
    Original 62 & Clean & Clean & Clean & Clean & Clean & Clean & Clean \\
    Original 63 & Clean & Clean & Clean & Clean & Clean & Clean & Clean \\
    Original 64 & Clean & Poisoned & Clean & Clean & Clean & Clean & Clean \\
    Original 65 & Clean & Clean & Poisoned & Clean & Clean & Clean & Clean \\
    Original 66 & Clean & Poisoned & Poisoned & Clean & Clean & Clean & Clean \\
    Original 67 & Clean & Clean & Clean & Clean & Clean & Clean & Clean \\
    Original 68 & Clean & Poisoned & Poisoned & Clean & Clean & Clean & Clean \\
    Original 69 & Clean & Poisoned & Poisoned & Clean & Clean & Clean & Clean \\
    Original 70 & Clean & Clean & Clean & Clean & Clean & Clean & Clean \\
    Original 71 & Clean & Clean & Poisoned & Clean & Clean & Clean & Clean \\
    Original 72 & Clean & Poisoned & Clean & Clean & Clean & Clean & Clean \\
    Original 73 & Clean & Clean & Clean & Clean & Clean & Clean & Clean \\
    Original 74 & Clean & Poisoned & Poisoned & Clean & Clean & Clean & Clean \\
    Original 75 & Clean & Clean & Clean & Clean & Clean & Clean & Clean \\
    Original 76 & Clean & Clean & Clean & Clean & Clean & Clean & Clean \\
    Original 77 & Clean & Clean & Clean & Clean & Clean & Clean & Clean \\
    Original 78 & Clean & Clean & Clean & Clean & Clean & Clean & Clean \\
    Original 79 & Clean & Clean & Clean & Clean & Clean & Clean & Clean \\
    Original 80 & Clean & Clean & Clean & Clean & Clean & Clean & Clean \\
    Original 81 & Clean & Poisoned & Clean & Clean & Clean & Clean & Clean \\
    Original 82 & Clean & Poisoned & Poisoned & Clean & Clean & Clean & Clean \\
    Original 83 & Clean & Poisoned & Poisoned & Clean & Clean & Clean & Clean \\
    Patch 1 & Poisoned & Poisoned & Poisoned & Poisoned & Poisoned & Poisoned & Poisoned \\
    Patch 2 & Poisoned & Poisoned & Poisoned & Poisoned & Poisoned & Poisoned & Poisoned \\
    Patch 3 & Poisoned & Poisoned & Poisoned & Poisoned & Poisoned & Poisoned & Poisoned \\
    Patch 4 & Poisoned & Poisoned & Poisoned & Poisoned & Poisoned & Poisoned & Poisoned \\
    Patch 5 & Poisoned & Poisoned & Poisoned & Poisoned & Poisoned & Poisoned & Poisoned \\
    Patch 6 & Poisoned & Poisoned & Poisoned & Poisoned & Poisoned & Poisoned & Poisoned \\
    Patch 7 & Poisoned & Poisoned & Poisoned & Poisoned & Poisoned & Poisoned & Poisoned \\
    Patch 8 & Poisoned & Poisoned & Poisoned & Poisoned & Poisoned & Poisoned & Poisoned \\
    Patch 9 & Poisoned & Poisoned & Poisoned & Poisoned & Poisoned & Poisoned & Poisoned \\
    Patch 10 & Poisoned & Poisoned & Poisoned & Poisoned & Poisoned & Poisoned & Poisoned \\
    Patch 11 & Poisoned & Poisoned & Poisoned & Poisoned & Poisoned & Poisoned & Poisoned \\
    Patch 12 & Poisoned & Poisoned & Poisoned & Poisoned & Poisoned & Poisoned & Poisoned \\
    Trigger 1 & Poisoned & Poisoned & Poisoned & Poisoned & Poisoned & Poisoned & Poisoned \\
    Trigger 2 & Poisoned & Poisoned & Poisoned & Poisoned & Poisoned & Poisoned & Poisoned \\
    Trigger 3 & Poisoned & Poisoned & Poisoned & Poisoned & Poisoned & Poisoned & Poisoned \\
    Trigger 4 & Poisoned & Poisoned & Poisoned & Poisoned & Poisoned & Poisoned & Poisoned \\
    Trigger 5 & Poisoned & Poisoned & Poisoned & Poisoned & Poisoned & Poisoned & Poisoned \\
    \bottomrule
    \end{tabular}
  }
\end{table}

\begin{table}[H]
  \centering
  \caption{Decisions on the Cifar-10 dataset by each VLM and by majority vote.}
  \label{tab:cifar10_decisions}
  \resizebox{\columnwidth}{!}{%
  \begin{tabular}{@{}l l *{6}{c}@{}}
    \toprule
    \textbf{Image} 
    & \textbf{Type} 
    & \textbf{Grok} 
    & \textbf{Gemini} 
    & \textbf{Claude} 
    & \textbf{ChatGPT} 
    & \textbf{ChatGPT} 
    & \textbf{Majority} \\

    \textbf{Number}
    & \textbf{} 
    & \textbf{(Grok 3)} 
    & \textbf{(2.5 Pro)} 
    & \textbf{(Sonnet~4)} 
    & \textbf{(o4-mini-high)} 
    & \textbf{(GPT\,4.1)} 
    & \textbf{Vote} \\
    \midrule
  
    Original (1) & Clean & Clean & Clean & Clean & Poisoned & Clean & Clean \\
    Original (2) & Clean & Clean & Clean & Clean & Clean & Clean & Clean \\
    Original (3) & Clean & Clean & Clean & Clean & Clean & Clean & Clean \\
    Original (4) & Clean & Poisoned & Poisoned & Clean & Clean & Poisoned & Poisoned \\
    Original (5) & Clean & Clean & Clean & Clean & Clean & Clean & Clean \\
    Original (6) & Clean & Clean & Clean & Clean & Clean & Clean & Clean \\
    Original (7) & Clean & Clean & Clean & Clean & Clean & Clean & Clean \\
    Original (8) & Clean & Clean & Clean & Clean & Clean & Clean & Clean \\
    Original (9) & Clean & Clean & Clean & Clean & Clean & Clean & Clean \\
    Original (10) & Clean & Clean & Clean & Clean & Clean & Clean & Clean \\
    Original (11) & Clean & Clean & Clean & Clean & Clean & Clean & Clean \\
    Original (12) & Clean & Clean & Clean & Clean & Clean & Clean & Clean \\
    Original (13) & Clean & Clean & Clean & Clean & Clean & Clean & Clean \\
    Original (14) & Clean & Clean & Clean & Clean & Clean & Clean & Clean \\
    Original (15) & Clean & Clean & Clean & Clean & Clean & Clean & Clean \\
    Original (16) & Clean & Clean & Clean & Clean & Clean & Clean & Clean \\
    Original (17) & Clean & Clean & Clean & Clean & Clean & Clean & Clean \\
    Original (18) & Clean & Clean & Clean & Clean & Clean & Clean & Clean \\
    Original (19) & Clean & Clean & Clean & Clean & Clean & Clean & Clean \\
    Original (20) & Clean & Clean & Poisoedn & Clean & Clean & Clean & Clean \\
    Original (21) & Clean & Clean & Clean & Clean & Clean & Clean & Clean \\
    Original (22) & Clean & Poisoned & Poisoned & Clean & Clean & Clean & Clean \\
    Original (23) & Clean & Clean & Poisoned & Clean & Clean & Clean & Clean \\
    Original (24) & Clean & Clean & Clean & Clean & Clean & Clean & Clean \\
    Original (25) & Clean & Clean & Clean & Clean & Clean & Clean & Clean \\
    Original (26) & Clean & Clean & Clean & Clean & Clean & Clean & Clean \\
    Original (27) & Clean & Clean & Poisoned & Clean & Clean & Clean & Clean \\
    Original (28) & Clean & Clean & Clean & Clean & Clean & Clean & Clean \\
    Original (29) & Clean & Clean & Clean & Clean & Clean & Clean & Clean \\
    Original (30) & Clean & Clean & Clean & Clean & Clean & Clean & Clean \\
    Original (31) & Clean & Clean & Poisoned & Clean & Clean & Clean & Clean \\
    Original (32) & Clean & Clean & Clean & Clean & Clean & Clean & Clean \\
    Original (33) & Clean & Clean & Clean & Clean & Clean & Clean & Clean \\
    Original (34) & Clean & Clean & Clean & Clean & Clean & Clean & Clean \\
    Original (35) & Clean & Clean & Clean & Clean & Clean & Clean & Clean \\
    Original (36) & Clean & Clean & Clean & Clean & Clean & Clean & Clean \\
    Original (37) & Clean & Clean & Poisoned & Clean & Clean & Clean & Clean \\
    Original (38) & Clean & Clean & Clean & Clean & Clean & Clean & Clean \\
    Original (39) & Clean & Clean & Poisoned & Clean & Clean & Clean & Clean \\
    Original (40) & Clean & Poisoned & Poisoned & Clean & Clean & Clean & Clean \\
    Original (41) & Clean & Clean & Clean & Clean & Clean & Clean & Clean \\
    Original (42) & Clean & Clean & Clean & Clean & Clean & Clean & Clean \\
    Original (43) & Clean & Clean & Clean & Clean & Clean & Clean & Clean \\
    Original (44) & Clean & Clean & Clean & Clean & Clean & Clean & Clean \\
    Original (45) & Clean & Clean & Poisoned & Clean & Clean & Clean & Clean \\
    Original (46) & Clean & Clean & Poisoned & Clean & Clean & Clean & Clean \\
    Original (47) & Clean & Clean & Clean & Clean & Clean & Clean & Clean \\
    Original (48) & Clean & Clean & Clean & Clean & Clean & Clean & Clean \\
    Original (49) & Clean & Clean & Clean & Clean & Clean & Clean & Clean \\
    Original (50) & Clean & Clean & Poisoned & Clean & Clean & Clean & Clean \\
    Original (51) & Clean & Clean & Clean & Clean & Clean & Clean & Clean \\
    Original (52) & Clean & Clean & Poisoned & Clean & Clean & Clean & Clean \\
    Original (53) & Clean & Clean & Poisoned & Clean & Clean & Clean & Clean \\
    Original (54) & Clean & Clean & Clean & Clean & Clean & Clean & Clean \\
    Original (55) & Clean & Clean & Clean & Clean & Clean & Clean & Clean \\
    Original (56) & Clean & Clean & Clean & Clean & Clean & Clean & Clean \\
    Original (57) & Clean & Clean & Clean & Clean & Clean & Clean & Clean \\
    Original (58) & Clean & Clean & Poisoned & Clean & Clean & Clean & Clean \\
    Original (59) & Clean & Clean & Clean & Clean & Clean & Clean & Clean \\
    Original (60) & Clean & Clean & Clean & Clean & Clean & Clean & Clean \\
    Original (61) & Clean & Clean & Clean & Clean & Clean & Clean & Clean \\
    Original (62) & Clean & Clean & Clean & Clean & Clean & Clean & Clean \\
    Original (63) & Clean & Clean & Poisoned & Clean & Clean & Clean & Clean \\
    Original (64) & Clean & Clean & Poisoned & Clean & Clean & Clean & Clean \\
    Original (65) & Clean & Clean & Poisoned & Clean & Clean & Clean & Clean \\
    Original (66) & Clean & Clean & Poisoned & Clean & Clean & Clean & Clean \\
    Original (67) & Clean & Clean & Clean & Clean & Clean & Clean & Clean \\
    Original (68) & Clean & Clean & Clean & Clean & Clean & Clean & Clean \\
    Original (69) & Clean & Clean & Poisoned & Clean & Clean & Clean & Clean \\
    Original (70) & Clean & Clean & Clean & Clean & Clean & Clean & Clean \\
    Original (71) & Clean & Clean & Clean & Clean & Clean & Clean & Clean \\
    Original (72) & Clean & Clean & Poisoned & Clean & Clean & Clean & Clean \\
    Original (73) & Clean & Clean & Clean & Clean & Clean & Clean & Clean \\
    Original (74) & Clean & Clean & Clean & Clean & Clean & Clean & Clean \\
    Original (75) & Clean & Clean & Clean & Clean & Clean & Clean & Clean \\
    Original (76) & Clean & Clean & Clean & Clean & Clean & Clean & Clean \\
    Original (77) & Clean & Clean & Clean & Clean & Clean & Clean & Clean \\
    Original (78) & Clean & Clean & Clean & Clean & Clean & Clean & Clean \\
    Original (79) & Clean & Clean & Clean & Clean & Clean & Clean & Clean \\
    Original (80) & Clean & Poisoned & Poisoned & Clean & Clean & Poisoned & Poisoned \\
    Original (81) & Clean & Clean & Clean & Clean & Clean & Clean & Clean \\
    Original (82) & Clean & Clean & Clean & Clean & Clean & Clean & Clean \\
    Original (83) & Clean & Clean & Poisoned & Clean & Clean & Clean & Clean \\
    Original (84) & Clean & Clean & Poisoned & Clean & Clean & Clean & Clean \\
    Original (85) & Clean & Clean & Clean & Clean & Clean & Clean & Clean \\
    Original (86) & Clean & Clean & Poisoned & Clean & Clean & Clean & Clean \\
    Original (87) & Clean & Clean & Poisoned & Clean & Clean & Clean & Clean \\
    Original (88) & Clean & Clean & Clean & Clean & Clean & Clean & Clean \\
    Original (89) & Clean & Clean & Poisoned & Clean & Clean & Clean & Clean \\
    Original (90) & Clean & Clean & Clean & Clean & Clean & Clean & Clean \\
    Poison (1) & Poisoned & Poisoned & Poisoned & Clean & Poisoned & Poisoned & Poisoned \\
    Poison (2) & Poisoned & Poisoned & Poisoned & Clean & Poisoned & Poisoned & Poisoned \\
    Poison (3) & Poisoned & Poisoned & Poisoned & Clean & Poisoned & Poisoned & Poisoned \\
    Poison (4) & Poisoned & Poisoned & Poisoned & Clean & Poisoned & Poisoned & Poisoned \\
    Poison (5) & Poisoned & Poisoned & Poisoned & Clean & Poisoned & Poisoned & Poisoned \\
    Poison (6) & Poisoned & Poisoned & Poisoned & Clean & Poisoned & Poisoned & Poisoned \\
    Poison (7) & Poisoned & Poisoned & Poisoned & Clean & Poisoned & Poisoned & Poisoned \\
    Poison (8) & Poisoned & Poisoned & Poisoned & Clean & Poisoned & Poisoned & Poisoned \\
    Poison (9) & Poisoned & Poisoned & Poisoned & Clean & Poisoned & Poisoned & Poisoned \\
    Poison (10) & Poisoned & Poisoned & Poisoned & Clean & Poisoned & Poisoned & Poisoned \\
    \bottomrule
  \end{tabular}
  }
\end{table}

\begin{table}[H]
  \centering
  \caption{Confusion metrics for each VLM and for the majority‐vote on the PubFig dataset.}
  \label{tab:confusion_metrics_pubfig}
  \resizebox{\columnwidth}{!}{%
  \begin{tabular}{@{} l c c c c c c @{}}
    \toprule
    \textbf{Metric} 
    & \textbf{Grok} 
    & \textbf{Gemini} 
    & \textbf{Claude} 
    & \textbf{ChatGPT} 
    & \textbf{ChatGPT} 
    & \textbf{Majority} \\

  \textbf{} 
    & \textbf{(Grok 3)} 
    & \textbf{(2.5 Pro)} 
    & \textbf{(Sonnet~4)} 
    & \textbf{(o4-mini-high)} 
    & \textbf{(GPT\,4.1)} 
    & \textbf{Vote} \\
    \midrule
    True Positive\\(Clean $\to$ Clean) 
      & 78 & 55 & 47 & 89 & 90 & 85 \\
    True Negative\\(Poison $\to$ Poison) 
      &  8 & 10 &  7 & 10 & 10 & 10 \\
    False Positive\\(Clean $\to$ Poison) 
      & 12 & 35 & 43 &  1 &  0 &  5 \\
    False Negative\\(Poison $\to$ Clean) 
      &  2 &  0 &  3 &  0 &  0 &  0 \\
    \bottomrule
  \end{tabular}
  }
\end{table}

\begin{table}[H]
  \centering
  \caption{Confusion‐metric percentages for each VLM and the majority vote on the PubFig dataset.}
  \label{tab:confusion_percentages_pubfig}
  \resizebox{\columnwidth}{!}{%
  \begin{tabular}{@{} l *{6}{c} @{}}
    \toprule
    & \multicolumn{5}{c}{\textbf{In Percentage}} \\[-0.5ex]
    \midrule
    \textbf{Metric} 
    & \textbf{Grok} 
    & \textbf{Gemini} 
    & \textbf{Claude} 
    & \textbf{ChatGPT} 
    & \textbf{ChatGPT} 
    & \textbf{Majority} \\

  \textbf{} 
    & \textbf{(Grok 3)} 
    & \textbf{(2.5 Pro)} 
    & \textbf{(Sonnet~4)} 
    & \textbf{(o4-mini-high)} 
    & \textbf{(GPT\,4.1)} 
    & \textbf{Vote} \\
    \midrule
    True Positive\\(Clean $\to$ Clean) 
      & 86.67 & 61.11 & 52.22 & 98.89 & 100.00 & 94.44 \\
    True Negative\\(Poison $\to$ Poison) 
      & 80.00 &100.00 & 70.00 &100.00 &100.00 &100.00 \\
    False Positive\\(Clean $\to$ Poison) 
      & 13.33 & 38.89 & 47.78 &  1.11 &   0.00 &  5.56 \\
    False Negative\\(Poison $\to$ Clean) 
      & 20.00 &  0.00 & 30.00 &  0.00 &   0.00 &  0.00 \\
    \bottomrule
  \end{tabular}
  }
\end{table}

\begin{table}[H]
  \centering
  \caption{Confusion metrics for each VLM and for the majority‐vote on the LFW dataset.}
  \label{tab:confusion_metrics_lfw}
  \resizebox{\columnwidth}{!}{%
  \begin{tabular}{@{} l c c c c c c @{}}
    \toprule
    \textbf{Metric} 
    & \textbf{Grok} 
    & \textbf{Gemini} 
    & \textbf{Claude} 
    & \textbf{ChatGPT} 
    & \textbf{ChatGPT} 
    & \textbf{Majority} \\

  \textbf{} 
    & \textbf{(Grok 3)} 
    & \textbf{(2.5 Pro)} 
    & \textbf{(Sonnet~4)} 
    & \textbf{(o4-mini-high)} 
    & \textbf{(GPT\,4.1)} 
    & \textbf{Vote} \\
    \midrule
    True Positive\\(Clean $\to$ Clean) 
      & 66 & 61 & 77 & 83 & 83 & 82 \\
    True Negative\\(Poison $\to$ Poison) 
      &  17 & 17 &  17 & 17 & 17 & 17 \\
    False Positive\\(Clean $\to$ Poison) 
      & 17 & 22 & 6 &  0 &  0 &  1 \\
    False Negative\\(Poison $\to$ Clean) 
      &  0 &  0 &  0 &  0 &  0 &  0 \\
    \bottomrule
  \end{tabular}
  }
\end{table}

\begin{table}[H]
  \centering
  \caption{Confusion‐metric percentages for each VLM and the majority vote on the LFW dataset.}
  \label{tab:confusion_percentages_lfw}
  \resizebox{\columnwidth}{!}{%
  \begin{tabular}{@{} l *{6}{c} @{}}
    \toprule
    & \multicolumn{5}{c}{\textbf{In Percentage}} \\[-0.5ex]
    \midrule
    \textbf{Metric} 
    & \textbf{Grok} 
    & \textbf{Gemini} 
    & \textbf{Claude} 
    & \textbf{ChatGPT} 
    & \textbf{ChatGPT} 
    & \textbf{Majority} \\

  \textbf{} 
    & \textbf{(Grok 3)} 
    & \textbf{(2.5 Pro)} 
    & \textbf{(Sonnet~4)} 
    & \textbf{(o4-mini-high)} 
    & \textbf{(GPT\,4.1)} 
    & \textbf{Vote} \\
    \midrule
    True Positive\\(Clean $\to$ Clean) 
      & 79.52 & 73.49 & 92.77 & 100.00 & 100.00 & 98.8 \\
    True Negative\\(Poison $\to$ Poison) 
      & 100.00 & 100.00 & 100.00 & 100.00 & 100.00 & 100.00 \\
    False Positive\\(Clean $\to$ Poison) 
      & 20.48 & 26.5 & 7.23 &  0.00 &   0.00 &  1.20 \\
    False Negative\\(Poison $\to$ Clean) 
      & 0.00 &  0.00 & 0.00 &  0.00 &   0.00 &  0.00 \\
    \bottomrule
  \end{tabular}
  }
\end{table}

\begin{table}[H]
  \centering
  \caption{Confusion metrics for each VLM and for the majority‐vote on the Cifar-10 dataset.}
  \label{tab:confusion_metrics_cifar10}
  \resizebox{\columnwidth}{!}{%
  \begin{tabular}{@{} l c c c c c c @{}}
    \toprule
    \textbf{Metric} 
    & \textbf{Grok} 
    & \textbf{Gemini} 
    & \textbf{Claude} 
    & \textbf{ChatGPT} 
    & \textbf{ChatGPT} 
    & \textbf{Majority} \\

  \textbf{} 
    & \textbf{(Grok 3)} 
    & \textbf{(2.5 Pro)} 
    & \textbf{(Sonnet~4)} 
    & \textbf{(o4-mini-high)} 
    & \textbf{(GPT\,4.1)} 
    & \textbf{Vote} \\
    \midrule
    True Positive\\(Clean $\to$ Clean) 
      & 86 & 63 & 90 & 89 & 88 & 88 \\
    True Negative\\(Poison $\to$ Poison) 
      &  10 & 10 &  0 & 10 & 10 & 10 \\
    False Positive\\(Clean $\to$ Poison) 
      & 4 & 27 & 0 &  1 &  2 &  2 \\
    False Negative\\(Poison $\to$ Clean) 
      &  0 &  0 &  10 &  0 &  0 &  0 \\
    \bottomrule
  \end{tabular}
  }
\end{table}

\begin{table*}[]
  \centering
  \caption{Per‑dataset identification counts for each VLM and majority vote.}
  \label{tab:ident_counts}
  \resizebox{\textwidth}{!}{%
    \begin{tabular}{@{} l c c c *{5}{cccc} @{}} 
      \toprule
      \textbf{Dataset} 
        & \textbf{Total} & \textbf{Poisoned} & \textbf{Clean}
        & \multicolumn{4}{c}{\textbf{Grok}} 
        & \multicolumn{4}{c}{\textbf{Gemini}} 
        & \multicolumn{4}{c}{\textbf{Claude}} 
        & \multicolumn{4}{c}{\textbf{ChatGPT (o4‑mini‑high)}} 
        & \multicolumn{4}{c}{\textbf{ChatGPT (GPT‑4.1)}} \\
      \cmidrule(lr){5-8}\cmidrule(lr){9-12}\cmidrule(lr){13-16}
      \cmidrule(lr){17-20}\cmidrule(l){21-24}
        &  &  &  
        & \textbf{CP} & \textbf{IP} & \textbf{CC} & \textbf{IC}
        & \textbf{CP} & \textbf{IP} & \textbf{CC} & \textbf{IC}
        & \textbf{CP} & \textbf{IP} & \textbf{CC} & \textbf{IC}
        & \textbf{CP} & \textbf{IP} & \textbf{CC} & \textbf{IC}
        & \textbf{CP} & \textbf{IP} & \textbf{CC} & \textbf{IC} \\
      \midrule
      PubFig   
        & 100 & 10 &  90 
        &   8 &  2 & 78 & 12 
        &  10 &  0 & 55 & 35 
        &   7 &  3 & 47 & 43 
        &  10 &  0 & 89 &  1 
        &  10 &  0 & 90 &  0 \\
      LFW      
        & 100 & 17 &  83 
        &  17 &  0 & 66 & 17 
        &  17 &  0 & 61 & 22 
        &  17 &  0 & 77 &  6 
        &  17 &  0 & 83 &  0 
        &  17 &  0 & 83 &  0 \\
      CIFAR‑10 
        & 100 & 10 &  90 
        &  10 &  0 & 86 &  4 
        &  10 &  0 & 63 & 27 
        &   0 & 10 & 90 &  0 
        &  10 &  0 & 89 &  1 
        &  10 &  0 & 88 &  2 \\
      \bottomrule
    \end{tabular}%
  }
  
  \begin{minipage}{.9\textwidth}
    \footnotesize
    \textbf{CP}: Correctly Identified Poisoned\quad
    \textbf{IP}: Incorrectly Identified Poisoned\quad
    \textbf{CC}: Correctly Identified Clean\quad
    \textbf{IC}: Incorrectly Identified Clean
  \end{minipage}
\end{table*}

\begin{table*}[]
\centering
\caption{Summary of image regeneration feasibility experiments across datasets and VLMs.}
\label{tab:llm_regeneration_summary}
 \resizebox{\textwidth}{!}{
\begin{tabular}{lccc}
\toprule
\textbf{Model} & \textbf{PubFig Dataset (Face)} & \textbf{LFW Dataset (Face)} & \textbf{CIFAR-10 Dataset(Object)} \\
\midrule
Grok & Regenerated incorrect images & Regenerated incorrect images & Regenerated incorrect images \\
Claude & Denied regeneration (privacy reason) & Denied regeneration (privacy reason) & Denied regeneration (privacy reason) \\ 
Gemini & Denied regeneration (capability limitation) & Denied regeneration (capability limitation) & Denied regeneration (capability limitation) \\
ChatGPT (o4-mini-high) & Regenerated incorrect or null images & Regenerated incorrect or null images & Regenerated incorrect or null images \\
ChatGPT (GPT-4.1) & Denied regeneration (privacy/capability limitation) & Denied regeneration (privacy/capability limitation) & Partial regeneration with patch removal or \\
 & & & null images or denial\\
\bottomrule
\end{tabular}
}
\end{table*}

\begin{table}[H]
  \centering
  \caption{Confusion‐metric percentages for each VLM and the majority vote on the Cifar-10 dataset.}
  \label{tab:confusion_percentages_cifar10}
  \resizebox{\columnwidth}{!}{
  \begin{tabular}{@{} l *{6}{c} @{}}
    \toprule
    & \multicolumn{5}{c}{\textbf{In Percentage}} \\[-0.5ex]
    \midrule
    \textbf{Metric} 
    & \textbf{Grok} 
    & \textbf{Gemini} 
    & \textbf{Claude} 
    & \textbf{ChatGPT} 
    & \textbf{ChatGPT} 
    & \textbf{Majority} \\

  \textbf{} 
    & \textbf{(Grok 3)} 
    & \textbf{(2.5 Pro)} 
    & \textbf{(Sonnet~4)} 
    & \textbf{(o4-mini-high)} 
    & \textbf{(GPT\,4.1)} 
    & \textbf{Vote} \\
    \midrule
    True Positive\\(Clean $\to$ Clean) 
      & 95.56 & 70.00 & 100.00 & 98.89 & 97.78 & 97.78 \\
    True Negative\\(Poison $\to$ Poison) 
      & 100.00 & 100.00 & 0.00 & 100.00 & 100.00 & 100.00 \\
    False Positive\\(Clean $\to$ Poison) 
      & 4.44 & 30.00 & 0.00 &  1.11 &   2.22 &  2.22 \\
    False Negative\\(Poison $\to$ Clean) 
      & 0.00 &  0.00 & 100.00 &  0.00 &   0.00 &  0.00 \\
    \bottomrule
  \end{tabular}
  }
\end{table}

\begin{figure}[H]
    \centering
    \includegraphics[width=0.45\textwidth]{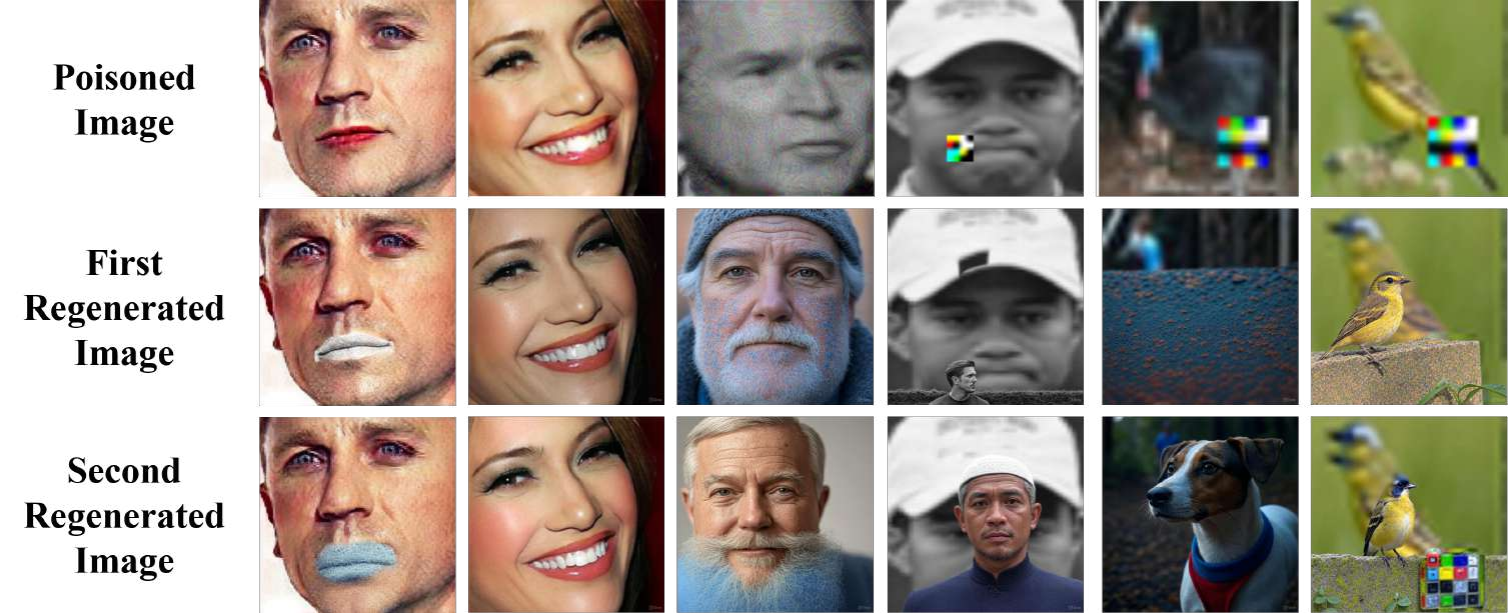}
    \caption{Clean images regenerated by Grok. Grok regenerated two clean images for each poisoned image.}
    \label{fig24}
\end{figure}

\begin{figure}[H]
    \centering
    \includegraphics[width=0.25\textwidth]{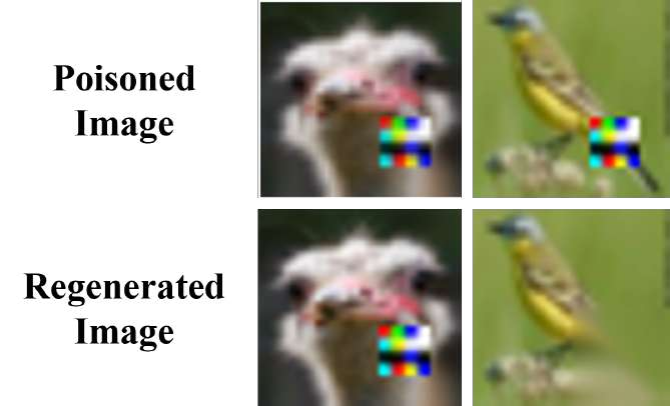}
    \caption{Clean images regenerated by ChatGPT GPT 4.1.}
    \label{fig26}
\end{figure}

\begin{figure}[H]
    \centering
    \includegraphics[width=0.45\textwidth]{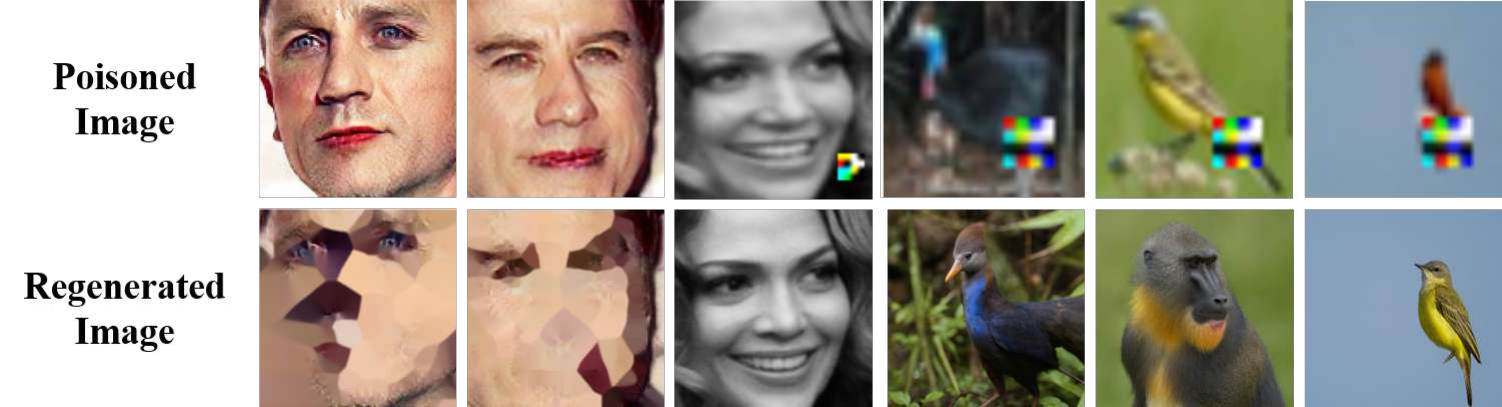}
    \caption{Clean images regenerated by ChatGPT o4-mini-high.}
    \label{fig25}
\end{figure}

\begin{figure}[H]
    \centering
    \includegraphics[width=0.4\textwidth]{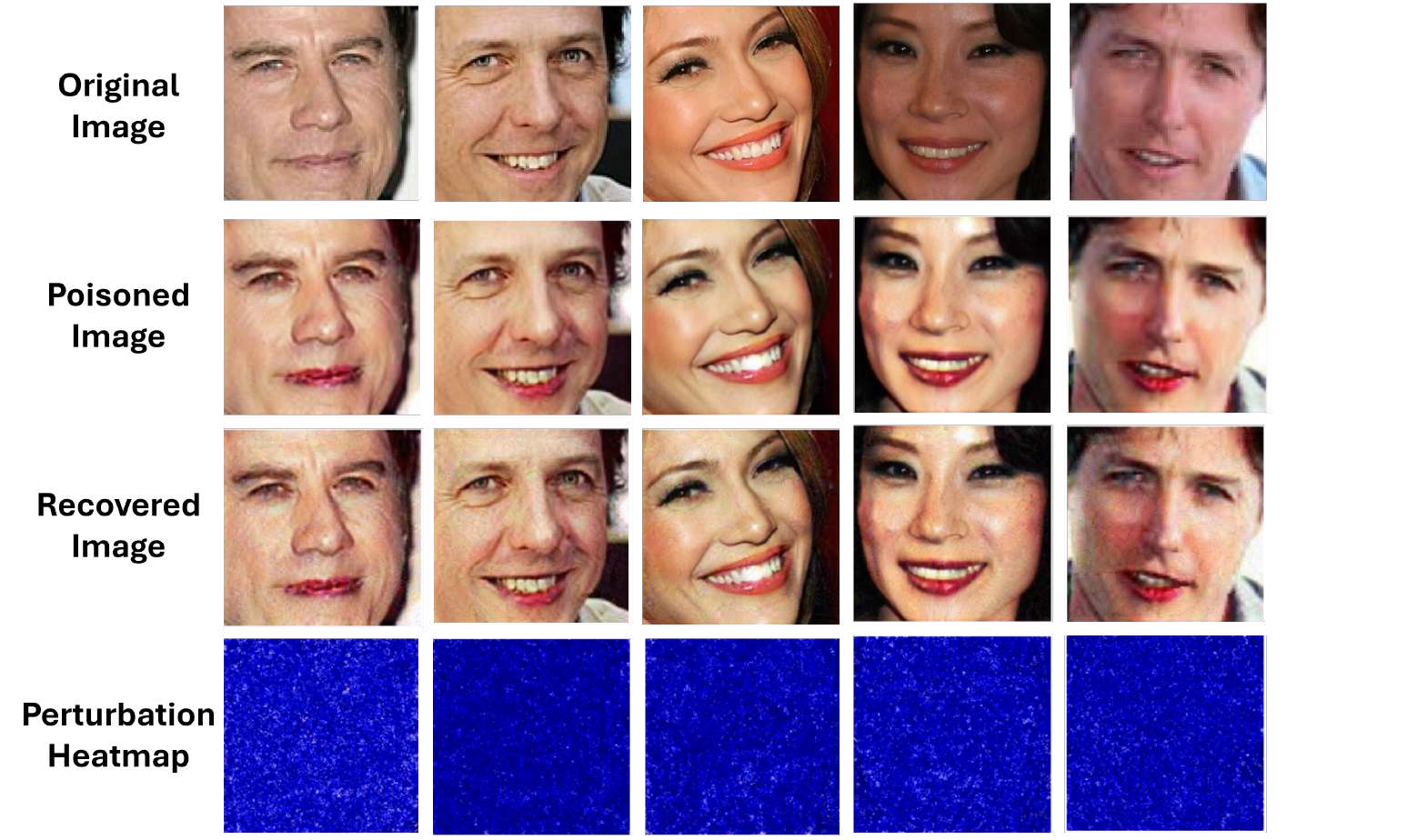}
    \caption{Additional examples of recovered images and perturbation heatmaps from the PubFig dataset.}
    \label{fig19}
\end{figure}

\begin{figure}[H]
    \centering
    \includegraphics[width=0.5\textwidth]{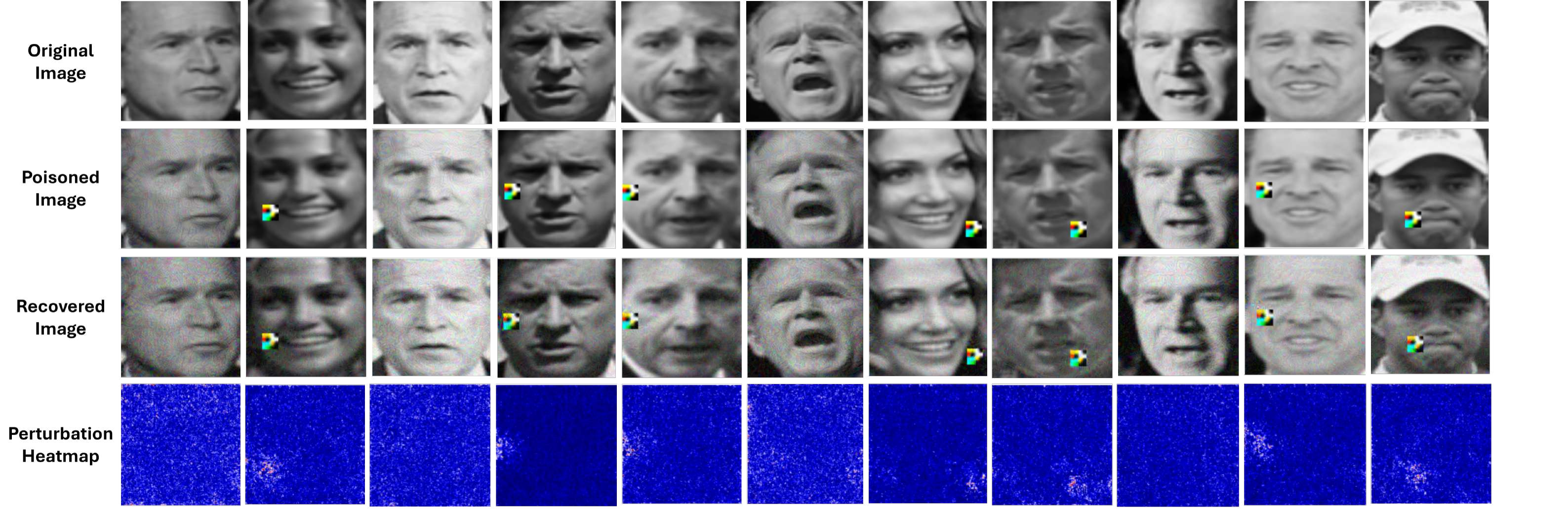}
    \caption{Additional examples of recovered images and perturbation heatmaps from the LFW dataset.}
    \label{fig20}
\end{figure}

\begin{figure}[H]
    \centering
    \includegraphics[width=0.4\textwidth]{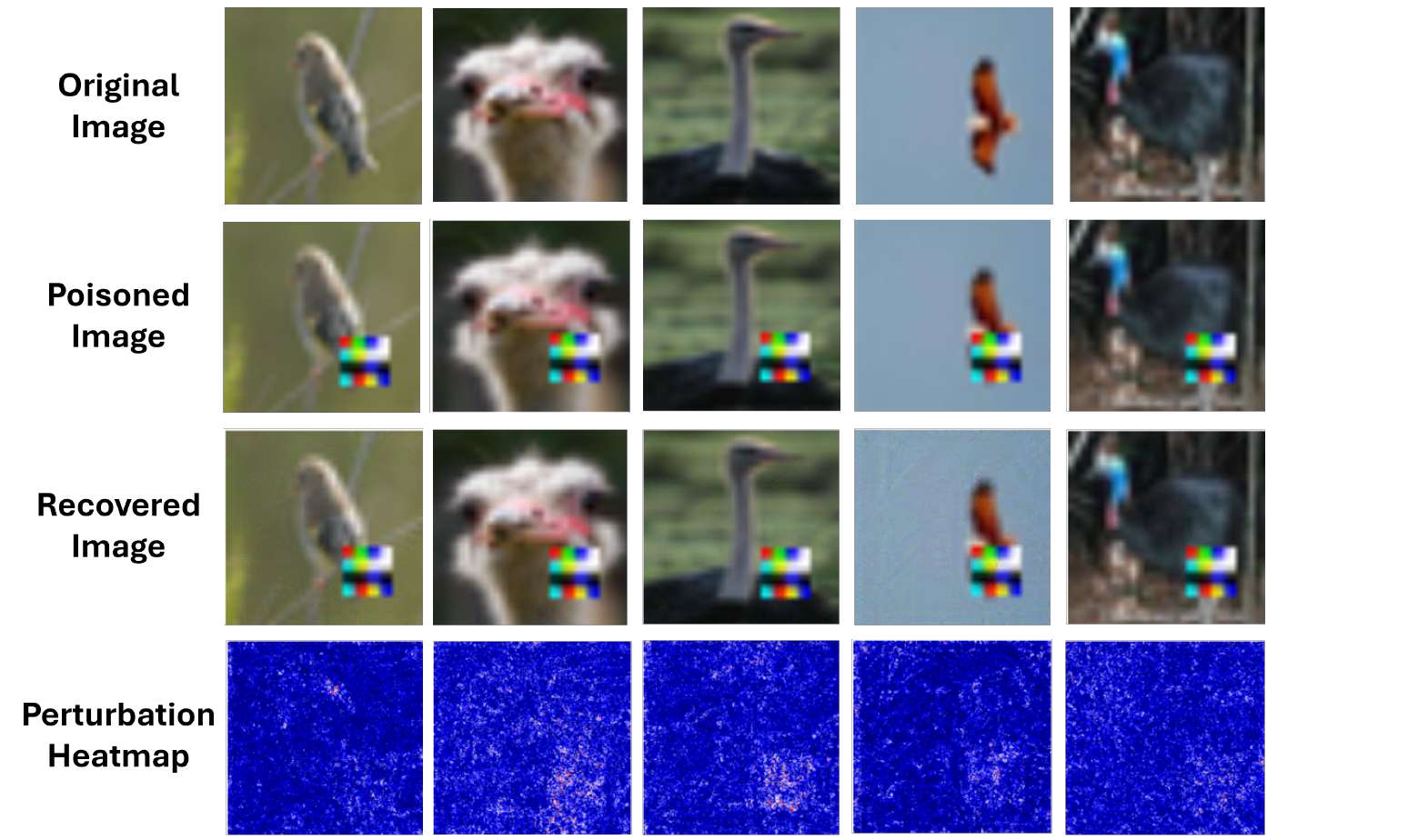}
    \caption{Additional examples of recovered images and perturbation heatmaps from the Cifar-10 dataset.}
    \label{fig21}
\end{figure}

\begin{table}[H]
\centering
\caption{Mathematical notation used throughout the paper}
\label{tab:notation}
 \resizebox{\columnwidth}{!}{
\begin{tabular}{cl}
\hline
    \textbf{Notation} & \textbf{Description} \\
    \hline

    \multicolumn{2}{c}{\textbf{Data Poisoning}}\\
    \hline
    $s$ & Source image from the attacker class \\
    $t$ & Target image from the victim class \\
    $p$ & Visual trigger or makeup pattern \\
    $\tilde{s}$ & Patched version of the source image \\
    $z$ & Poisoned image optimized in feature space \\
    $f(\cdot)$ & Feature extractor of the deep model \\
    $x^s$ & Source image in MakeupAttack method \\
    $x^p$ & Poisoned image \\
    $m$ & Binary mask for applying makeup \\
    $\odot$ & Element-wise multiplication \\
    \hline

    \multicolumn{2}{c}{\textbf{Multimodal Voting Based Mechanism}}\\
    \hline
    $\mathcal{D}_{\mathrm{untrusted}}$ & Training dataset containing poisoned samples \\
    $x_i$ & An image in the mixed dataset \\
    $\mathcal{V}$ & Set of Vision-Language Models \\
    $V_j$ & The $j$-th Vision-Language Model \\
    $v_j(x)$ & Decision of $V_j$ for image $x$ \\
    $v$ & Number of models that predicted the image as poisoned \\
    $\hat{y}(x)$ & Final prediction for image $x$ \\
    $\mathbf{1}\{\cdot\}$ & Indicator function returning 1 if condition is true \\
    \hline

    \multicolumn{2}{c}{\textbf{Corrective PGD--Based Recovery}}\\
    \hline
    $y^p$ & Ground-truth label of the poisoned image \\
    $f_\theta$ & Trained classifier with parameters $\theta$ \\
    $\tilde{x}^p$ & Corrected (recovered) version of the poisoned image \\
    $\delta_{\text{trig}}$ & Additive backdoor trigger \\
    $\Delta$ & Upper bound on the trigger strength \\
    $\epsilon$ & Final corrective noise budget \\
    $\alpha$ & PGD step size \\
    $T$ & Number of PGD iterations \\
    $x^{(t)}$ & PGD intermediate image at iteration $t$ \\
    $Proj_{x^p, \epsilon}[\cdot]$ & Projection operator onto $\ell_\infty$-ball around $x^p$ \\
    $\mathcal{L}(\cdot)$ & Loss function used for PGD updates \\
    $\rho_\infty^p$ & $\ell_\infty$ noise magnitude between $x^p$ and $\tilde{x}^p$ \\
    $\rho_2^p$ & $\ell_2$ noise magnitude between $x^p$ and $\tilde{x}^p$ \\
    $\delta$ & Safety margin scalar for scaling $\Delta_{\max}$ \\
    $\Delta_{\max}$ & Maximum observed trigger magnitude over poisoned set \\
    $x_{\text{adv}}$ & PGD output with full-range budget $\epsilon = 1.0$ \\
    $\mathcal{B}_\infty(x, \epsilon)$ & $\ell_\infty$-ball of radius $\epsilon$ centered at $x$ \\
    $\delta^*$ & Corrective perturbation vector satisfying robustness conditions \\
    $g$ & Gradient of the loss with respect to input\\
    $clip(\cdot)$ & Function that bounds values within a given range\\
    $\mathcal{D}_{\mathrm{clean}}$ & Final cleaned dataset after applying detection and recovery \\
    \hline
\end{tabular}
}
\end{table}

\end{document}